\documentclass[lettersize,journal]{IEEEtran}
\def\BibTeX{{\rm B\kern-.05em{\sc i\kern-.025em b}\kern-.08em
    T\kern-.1667em\lower.7ex\hbox{E}\kern-.125emX}}
\usepackage{balance}

\usepackage{csquotes}
\usepackage[english]{babel}
\usepackage{amsmath,amsfonts,amssymb}
\usepackage{empheq}
\usepackage{dsfont}
\usepackage{mathrsfs}
\usepackage{siunitx} 
\usepackage[ruled,linesnumbered,vlined]{algorithm2e}
\usepackage{multirow}
\usepackage{float}
\usepackage{array}
\usepackage{diagbox}
\usepackage{setspace}
\usepackage{catchfile}
\usepackage[normalem]{ulem}
\usepackage{subcaption}
\usepackage{subfiles} 
\usepackage{nameref}
\usepackage[bookmarks,hidelinks]{hyperref} 
\usepackage[capitalize]{cleveref}
\Crefname{line}{line}{lines}
\Crefname{assumption}{Assumption}{Assumptions}
\usepackage[c2]{optidef}
\usepackage{xcolor}

\definecolor{brightpink}{rgb}{1.0, 0.0, 0.5}

\newcommand{\revise}[1]{{{\color{black} #1}}}

\usepackage{tikz}
\usepackage{tikz-3dplot}
\usepackage{pgfplots}
\pgfplotsset{compat=newest}
\usepgfplotslibrary{statistics}
\usepgfplotslibrary{fillbetween}
\usepgfplotslibrary{groupplots}
\usetikzlibrary{shapes,calc,positioning,spy}
\def\addlegendimage{\csname pgfplots@addlegendimage\endcsname}

\usetikzlibrary{external}
\makeatletter
\AtEndDocument{%
  \protected@write\@auxout{}{\string\gdef\string\lastfig{\number\value{figure}}}%
}
\makeatother

\title{Maximum-Volume \\ Nonnegative Matrix Factorization} 
\date{}

\author{
  Olivier {Vu Thanh}\textsuperscript{1,2}\thanks{\revise{
    \textsuperscript{1}French geological survey (BRGM), Avenue Claude Guillemin 3, 45100 Orléans, France.} \textsuperscript{2}University of Mons, Rue de Houdain 9, 7000 Mons, Belgium. Emails: \href{mailto:o.vuthanh@brgm.fr}{o.vuthanh@brgm.fr}, \href{mailto:nicolas.gillis@umons.ac.be}{nicolas.gillis@umons.ac.be}. The authors acknowledge the support by the European Research Council (ERC consolidator, eLinoR, no 101085607). OVT was supported by an F.R.S.-FNRS FRIA grant when this paper was written. 
  }
  \and
  Nicolas Gillis\textsuperscript{2}
}

\newtheorem{theorem}{Theorem}
\newtheorem{corollary}{Corollary}
\newtheorem{definition}{Definition}

\newtheorem{remark}{Remark}

\newtheorem{proof}{Proof}

\newcommand{\overbar}[1]{\mkern 1.5mu\overline{\mkern-1.5mu#1\mkern-1.5mu}\mkern 1.5mu}

\DeclareMathOperator{\conv}{conv}

\DeclareMathOperator*{\argmin}{argmin}
\DeclareMathOperator{\cone}{cone}
\DeclareMathOperator{\rank}{rank}

\DeclareMathOperator{\col}{col}

\DeclareMathOperator{\logdet}{logdet}
\DeclareMathOperator{\tr}{tr}
\DeclareMathOperator{\diag}{Diag}

\DeclareMathOperator*{\dom}{dom}

\newcommand{\R}{\mathbb{R}}

\newcommand{\Wt}{{W^\top}}

\newcommand{\Wold}{W_o}
\newcommand{\Hold}{H_o}
\newcommand{\Wextra}{\overbar{W}}
\newcommand{\Hextra}{\overbar{H}}
\newcommand{\Wextraold}{\overbar{W}_o}
\newcommand{\Hextraold}{\overbar{H}_o}
\newcommand{\La}{\mathcal{L}}

\definecolor{DarkGreen}{rgb}{0.13,0.55,0.13}%

\begin{document}

\markboth{}%
{}

\IEEEpubid{}

\maketitle

\begin{abstract}
    Nonnegative matrix factorization (NMF) is a popular data embedding technique. Given a nonnegative data matrix $X$, it aims at finding two lower dimensional matrices, $W$ and $H$, such that $X\approx WH$, where the factors $W$ and $H$ are constrained to be element-wise nonnegative. The factor $W$ serves as a basis for the columns of $X$. In order to obtain more interpretable and unique solutions, minimum-volume NMF (MinVol NMF) minimizes the volume of $W$.     
    In this paper, we consider the dual approach, where   the volume of $H$ is maximized instead; this is referred to as maximum-volume NMF (MaxVol NMF).  
    MaxVol NMF is identifiable under the same conditions as MinVol NMF in the noiseless case, 
    but it behaves rather differently in the presence of noise. 
    In practice, MaxVol NMF is much more effective to extract a sparse decomposition and does not generate rank-deficient solutions. 
    In fact, we prove that the solutions of MaxVol NMF with the largest volume correspond to clustering the columns of $X$ in disjoint clusters, while the solutions of MinVol NMF with smallest volume are rank deficient. 
    We propose two algorithms to solve MaxVol NMF. We also present a normalized variant of MaxVol NMF that exhibits better performance than MinVol NMF and MaxVol NMF, and can be interpreted as a continuum between standard NMF and orthogonal NMF. 
    We illustrate our results in the context of hyperspectral unmixing. The code is available from {\color{blue}\url{https://gitlab.com/vuthanho/maxvolmf.jl}}. 
\end{abstract}

\begin{IEEEkeywords}
Volume-based regularization, 
nonnegative matrix factorization, 
identifiability, 
algorithms, 
hyperspectral unmixing. 
\end{IEEEkeywords}

\section{Introduction}\label{sec:introduction}

Let $X\in\R^{m\times n}$ be a flattened hyperspectral data cube, where $m$ is the number of spectral bands and $n$ is the number of pixels. Hyperspectral unmixing (HU) aims at finding the set of spectral signatures of the materials present in the scene, called endmembers, and their corresponding proportion in each pixel, called abundances. These endmembers and abundances are respectively stored in the matrices $W\in\R^{m\times r}$ and $H\in\R^{r\times n}$, where $r$ is the number of endmembers. If we assume that interactions other than linear mixing are negligible, the data can be modeled as $X=WH+N$, where $N$ is a noise matrix. Retrieving $W$ and $H$ is a challenging task that requires other assumptions than low-rank; in particular, nonnegative matrix factorization (NMF)~\cite{lee1999learning} has shown to be effective for this task~\cite{bioucas2012hyperspectral}. The nonnegativity assumption is motivated by the fact that the spectral reflectance of materials is nonnegative, and that the abundances of materials can only be additive. Another assumption is the minimum-volume criterion, originally thought by~\cite{full1981extended} and the so-called Craig's belief~\cite{craig1994minimum}. The idea is that, in the absence of pure pixels (that is, pixels containing a single endmember), finding endmembers whose cone or convex hull tightly contains the data points retrieves the true endmembers. 
The minimum-volume criterion has also been used successfully in other applications, such as blind audio source separation \cite{leplat2019blind,wang2021minimum} and topic modeling~\cite{fu2016robust, fu2018anchor}. 
Regardless of the application, the minimum-volume criterion encourages interpretability of the features since they are close to the data points.

Maximum-volume NMF (MaxVol NMF), 
introduced in~\cite{tatli2021polytopic}, maximizes the volume of $H$ and can be interpreted as a dual approach of MinVol NMF~\cite{abdolali2024dual}; 
see Section~\ref{sec:motivation} for more details. 
However, to the best of our knowledge, its practical behavior, in particular on HU, has not been explored much. 
However, we will see that, in the presence of noise, MinVol NMF and MaxVol NMF \revise{behave} rather differently. In particular,  MaxVol NMF is much more effective to extract a sparse factor $H$ (that is, sparse abundances) and does not generate rank-deficient solutions.

\paragraph*{Outline and contribution of the paper} In \Cref{sec:motivation}, we motivate, introduce and analyze MaxVol NMF. In \Cref{sec:solvemaxvol}, we propose two algorithms to solve MaxVol NMF. In \Cref{sec:normmaxvolnmf}, we present a normalized variant of MaxVol NMF that exhibits better performance than MinVol NMF and MaxVol NMF, which we illustrate in the context of HU in Section~\ref{sec:normmaxvolnmfexp}. We conclude and discuss future works in \Cref{sec:conclusion}.

\section{Motivation: MinVol vs.\ MaxVol NMF}\label{sec:motivation}
\IEEEpubidadjcol

In this section, we first highlight two weaknesses of MinVol NMF, and then introduce MaxVol NMF and how it avoids these two pitfalls.

\subsection{Two weaknesses on MinVol NMF} 

In the absence of noise, MinVol NMF is defined as follows 
\begin{equation} \label{eq:exactminvol} 
    \min_{W,H} \det(W^\top W) \; \text{ such that } \; 
    X= WH, W \geq 0, \; H\in\Delta^{r\times n}, 
\end{equation}
where $\Delta^{r \times n}$ is the set of $r \times n$ matrices whose columns lies in the probability simplex,  $\Delta^r=\{x\in\R^r \ | \ x\geq0,1_r^\top x=1\}$, 
$1_r\in\R^r$ being the vector of all ones. 
This means that the simplex structure is imposed on the columns of $H$; this is called the sum-to-one constraint in the HU literature. 
The quantity $\sqrt{\det(W^\top W)}/r!$ is the volume of the columns of $W$ and the origin within the $r$-dimensional column space of $W$. 

\revise{
\begin{remark}[Beware of sum-to-one rows for $H$] 
In this paper, we do not consider the sum-to-one constraint on the rows of $H$ for MinVol NMF as studied in~\cite{fu2018identifiability}. We explain in details in Appendix~\ref{app:SSCrownorm} why. The reason is an overlooked/unknown fact in the literature: \emph{making the rows of $H$ sum to one often destroys the identifiability} for imbalanced data sets (that is, when endmembers are present in rather different proportions). 
\end{remark}
}

In practice, in the presence of noise, MinVol NMF needs to balance the volume and the data fitting terms, and the following optimization problem is often considered 
\begin{mini}|s|
	{\scriptstyle W,H}{\|X - WH\|_F^2 + \lambda \logdet(\Wt W + \delta I )}{\label{eq:approxminvol}}{}
	\addConstraint{W \geq 0}{,~H\in\Delta^{r\times n}, }{}
\end{mini}
where $\lambda$ is a penalty parameter, and $\delta$ is a small constant preventing $\logdet$ to be unbounded from below. The use of the logdet has algorithmic and practical advantages~\cite{fu2016robust, ang19}. 

Let us discuss two weaknesses of MinVol NMF. First, the MinVol criterion introduces a bias that can reduce the quality of the unmixing. 
\revise{In fact, regularization is well-known to induce bias, e.g., the $\ell_1$ norm penalty in sparse regression shrinks regression coefficients while inducing sparsity~\cite{fan2001variable}.  
This follows 
directly from the tradeoff between the data-fitting and volume terms in~(2) .  
The reduction of the MinVol penalty will typically come at the price of an increase in the data-fitting error. For example, in the exact case, where we are looking for a zero data fitting term (that is, $X = WH$), the optimal solution of~(2)  will degrade the data fitting term as $\lambda$ increases in order to promote the MinVol penalty.   
}

Let us illustrate this with the Samson dataset that contains mostly three endmembers:  water, soil and tree. 
The spectral signature of the water has a low magnitude relatively to the spectral signature of the soil and tree. Hence a bad estimation of the water spectral signature does not increase significantly the reconstruction error. Decreasing the norm of the spectral signature of the water is therefore an easy way to decrease the volume of $W$, and it can be done with only a slight increase in reconstruction error. 
This can be seen on \Cref{fig:lambda1minvol}, where the spectral signature of water (in red) for MinVol NMF with $\lambda=1$ contains $24$ zeros, while there should not be any zeros because there is not a wavelength at which water absorbs completely electromagnetic energy. Here, increasing $\lambda$ will only worsen this behavior, e.g., 36 zeros for $\lambda = 50$. 


\begin{figure}
\begin{subfigure}{\linewidth}
    \caption{NMF}
    \begin{minipage}[t]{0.39\linewidth}
        \vspace{0pt}
        	\begin{tikzpicture}
        		\begin{axis}[cycle list name=color list,width=1.2\linewidth,height=0.9\linewidth,line width = 1pt,no markers,grid=major,
                    tick label style={/pgf/number format/fixed,font=\tiny},ymin=0]
        		\foreach \col in {0,...,2} {
        			\addplot table [x expr=\coordindex+1, y index=\col] {maxvolnmf/figures/samson_minvol_0.txt};
        		}
        		\end{axis}
        	\end{tikzpicture}
    \end{minipage}
    \begin{minipage}[t]{0.6\linewidth}
        \vspace{0pt}
            \fbox{\includegraphics[width=\linewidth]{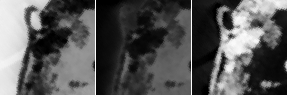}}
    \end{minipage}
\end{subfigure}

\vspace{0.2cm}

\begin{subfigure}{\linewidth}
    \caption{$\lambda=1$}
    \begin{minipage}[t]{0.39\linewidth}\vspace{0pt}
        	\begin{tikzpicture}
        		\begin{axis}[cycle list name=color list,width=1.2\linewidth,height=0.9\linewidth,line width = 1pt,no markers,grid=major,
                    tick label style={/pgf/number format/fixed,font=\tiny},ymin=0]
        		\foreach \col in {0,...,2} {
        			\addplot table [x expr=\coordindex+1, y index=\col] {maxvolnmf/figures/samson_minvol_1.txt};
        		}
        		\end{axis}
        	\end{tikzpicture}
    \end{minipage}
    \begin{minipage}[t]{0.6\linewidth}\vspace{0pt}
            \fbox{\includegraphics[width=\linewidth]{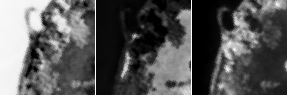}}
    \end{minipage}
    \label{fig:lambda1minvol}
\end{subfigure}

\vspace{0.2cm}

\begin{subfigure}{\linewidth}
    \caption{$\lambda=10$}
    \begin{minipage}[t]{0.39\linewidth}\vspace{0pt}
        \begin{tikzpicture}
            \begin{axis}[cycle list name=color list,width=1.2\linewidth,height=0.9\linewidth,line width = 1pt,no markers,grid=major,
                tick label style={/pgf/number format/fixed,font=\tiny},ymin=0]
            \foreach \col in {0,...,2} {
                \addplot table [x expr=\coordindex+1, y index=\col] {maxvolnmf/figures/samson_minvol_10.txt};
            }
            \end{axis}
        \end{tikzpicture}
    \end{minipage}
    \begin{minipage}[t]{0.6\linewidth}\vspace{0pt}
            \fbox{\includegraphics[width=\linewidth]{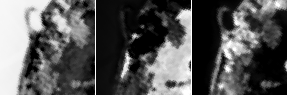}}
    \end{minipage}
\end{subfigure}

\vspace{0.2cm}

\begin{subfigure}{\linewidth}
    \caption{$\lambda=50$}
   \begin{minipage}[t]{0.39\linewidth}\vspace{0pt}
        \begin{tikzpicture}
            \begin{axis}[cycle list name=color list,width=1.2\linewidth,height=0.9\linewidth,line width = 1pt,no markers,grid=major,
                tick label style={/pgf/number format/fixed,font=\tiny},ymin=0]
            \foreach \col in {0,...,2} {
                \addplot table [x expr=\coordindex+1, y index=\col] {maxvolnmf/figures/samson_minvol_50.txt};
            }
            \end{axis}
        \end{tikzpicture}
    \end{minipage}
   \begin{minipage}[t]{0.6\linewidth}\vspace{0pt}
            \fbox{\includegraphics[width=\linewidth]{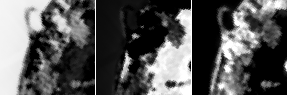}}
    \end{minipage}
\end{subfigure}
\caption[Abundance maps and endmembers for MinVol on Samson]{Abundance maps and normalized endmembers (from the left to the right: {\color{red}water}, {\color{blue}soil} and tree) for MinVol on the Samson dataset with $\delta=1$.}
\label{fig:samsonseverallambdaminvol}
\end{figure}
\IEEEpubidadjcol
\revise{Second, minimizing the volume of $W$ does not provide direct control over the sparsity of $H$. This can be understood geometrically. Recall that, when the columns of $W$ are affinely independent and $X(:,j)=Wh_j$ with $h_j\in\Delta^r$, the entries of $h_j$ are the barycentric coordinates of $X(:,j)$ in $\conv(W)$. Hence, a data point lying in the interior of $\conv(W)$ has strictly positive barycentric coordinates, and therefore $h_j$ is positive (dense).  In contrast, if the data point $X(:,j)$ lies on a facet of $\conv(W)$, at least one entry of $h_j$ is zero.  
Therefore, reducing the volume of $\conv(W)$ while keeping the data points inside it tends to bring the data closer to its facets and indirectly promote sparsity in $H$. However, in the noisy setting considered in~\eqref{eq:approxminvol}, the data points are not constrained to belong to
$\conv(W)$. As $\lambda$ increases, the volume term may shrink $\conv(W)$ to the point where some data points lie outside it. In that case, there is no reason for these abundances to become sparser as the volume decreases. Consequently, increasing $\lambda$ does not necessarily increase the sparsity of $H$.} 
This behavior is illustrated in~\Cref{fig:samsonseverallambdaminvol}.
As $\lambda$ increases, the corresponding abundance maps become slightly crisper, but the improvement in terms of sparsity remains limited and comes with a worse spectral signature for water. 

Now consider another, less noisy, dataset, the Moffett dataset. 
On \Cref{fig:moffettseverallambdaminvol}, we observe that the abundance map for NMF is not perfect, with mixed materials in the abundance maps. 
Adding the MinVol criterion with $\lambda=1$ improves the decomposition and, as a consequence, the sparsity. Still, the water and tree extraction are not correct, as there are some detected water within the lands where it should in fact be trees. Increasing $\lambda$ to 10 slightly improves this, but the water artifacts are still there. Increasing $\lambda$ further does not improve the unmixing. With $\lambda=50$, one of the columns of $W$  collapses to zero. 
If a practitioner has some a priori knowledge on the sparsity of the decomposition, MinVol NMF cannot explicitly control sparsity, though sparsity is often desired in unmixing. 

In this paper, we will see how MaxVol NMF preserves the spirit of MinVol NMF without the aforementioned weaknesses.

\begin{figure}
\begin{subfigure}{\linewidth}
    \caption{NMF}
   \begin{minipage}[t]{0.39\linewidth}\vspace{0pt}
        \begin{tikzpicture}
            \begin{axis}[ymax=1, xmin=0, xmax=156, cycle list name=color list,width=1.2\linewidth,height=0.9\linewidth,line width = 1pt,no markers,grid=major,
                tick label style={/pgf/number format/fixed,font=\tiny},ymin=0]
            \foreach \col in {0,...,2} {
                \addplot table [x expr=\coordindex+1, y index=\col] {maxvolnmf/figures/moffett_minvol_0.txt};
            }
            \end{axis}
        \end{tikzpicture}
    \end{minipage}
   \begin{minipage}[t]{0.6\linewidth}\vspace{0pt}
            \fbox{\includegraphics[width=\linewidth]{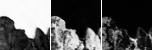}}
    \end{minipage}
\end{subfigure}

\vspace{0.2cm}

\begin{subfigure}{\linewidth}
    \caption{$\lambda=1$}
   \begin{minipage}[t]{0.39\linewidth}\vspace{0pt}
        \begin{tikzpicture}
            \begin{axis}[ymax=1, xmin=0, xmax=156, cycle list name=color list,width=1.2\linewidth,height=0.9\linewidth,line width = 1pt,no markers,grid=major,
                tick label style={/pgf/number format/fixed,font=\tiny},ymin=0]
            \foreach \col in {0,...,2} {
                \addplot table [x expr=\coordindex+1, y index=\col] {maxvolnmf/figures/moffett_minvol_1.txt};
            }
            \end{axis}
        \end{tikzpicture}
    \end{minipage}
   \begin{minipage}[t]{0.6\linewidth}\vspace{0pt}
            \fbox{\includegraphics[width=\linewidth]{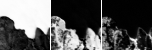}}
    \end{minipage}
\end{subfigure}

\vspace{0.2cm}

\begin{subfigure}{\linewidth}
    \caption{$\lambda=10$}
   \begin{minipage}[t]{0.39\linewidth}\vspace{0pt}
        \begin{tikzpicture}
            \begin{axis}[ymax=1, xmin=0, xmax=156, cycle list name=color list,width=1.2\linewidth,height=0.9\linewidth,line width = 1pt,no markers,grid=major,
                tick label style={/pgf/number format/fixed,font=\tiny},ymin=0]
            \foreach \col in {0,...,2} {
                \addplot table [x expr=\coordindex+1, y index=\col] {maxvolnmf/figures/moffett_minvol_10.txt};
            }
            \end{axis}
        \end{tikzpicture}
    \end{minipage}
   \begin{minipage}[t]{0.6\linewidth}\vspace{0pt}
            \fbox{\includegraphics[width=\linewidth]{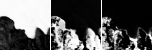}}
    \end{minipage}
\end{subfigure}

\vspace{0.2cm}

\begin{subfigure}{\linewidth}
    \caption{$\lambda=50$}
   \begin{minipage}[t]{0.39\linewidth}\vspace{0pt}
        \begin{tikzpicture}
            \begin{axis}[ymax=1, xmin=0, xmax=156, cycle list name=color list,width=1.2\linewidth,height=0.9\linewidth,line width = 1pt,no markers,grid=major,
                tick label style={/pgf/number format/fixed,font=\tiny},ymin=0]
            \foreach \col in {0,...,2} {
                \addplot table [x expr=\coordindex+1, y index=\col] {maxvolnmf/figures/moffett_minvol_50.txt};
            }
            \end{axis}
        \end{tikzpicture}
    \end{minipage}
   \begin{minipage}[t]{0.6\linewidth}\vspace{0pt}
            \fbox{\includegraphics[width=\linewidth]{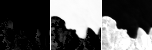}}
    \end{minipage}
\end{subfigure}
\caption[Abundance maps and endmembers for MinVol on Moffett]{Abundance maps and normalized endmembers (from the left to the right: {\color{red}water}, {\color{blue}soil} and tree, except for $\lambda=50$) for MinVol on the Moffett dataset with $\delta=0.1$.}
\label{fig:moffettseverallambdaminvol}
\end{figure}

\subsection{MaxVol NMF}\label{sec:maxvolnmf}

Let us introduce MaxVol NMF through its equivalence with MinVol NMF in the exact case. Consider the NMF $X=\overbar{W}\overbar{H}$ where $\overbar{W}$ has full column rank. 
For any full column rank matrix $W$ with the same column space as $\overbar{W}$, there exists an invertible matrix $Q$ such that $W=\overbar{W}Q$. Then, $$\det(W^\top W)=\det(Q^\top \overbar{W}^\top \overbar{W}Q)=\det(Q)^2\det(\overbar{W}^\top \overbar{W}).$$ Minimizing $\det(W^\top W)$ is equivalent to minimizing $\det(Q)^2\det(\overbar{W}^\top \overbar{W})$. 
Hence, given $X=\overbar{W}\overbar{H}$, computing the exact MinVol NMF of $X$ is equivalent to solving
\begin{equation} \label{eq:exactQminvol} 
    \min_Q \det(Q)^2 
    \quad \text{ such that } 
    \quad 
\overbar{W}Q\geq0, Q^{-1}\overbar{H}\in\Delta^{r\times n}. 
\end{equation}
Minimizing the quantity $\det(Q)^2$ is equivalent to maximizing the quantity \revise{$\det(Q^{-1})^2 = \det(Q)^{-2}$}. 
To sum up, in the exact case, minimizing the volume of $W$ is equivalent to maximizing the volume of $H$. This leads us to the MaxVol NMF formulation in the absence of noise:  
\begin{equation} \label{eq:exactmaxvol} 
\max_{W,H} \det(HH^\top) 
\; \text{ such that } \; 
X=WH, \; W \geq 0, \; H\in\Delta^{r\times n}. 
\end{equation}
We now discuss the identifiability of MaxVol NMF, and its practical behavior. 

\subsection{Identifiability of MaxVol NMF}\label{sec:identifiability}

In the absence of noise, MaxVol NMF is just as identifiable as MinVol NMF~\cite{lin15, fu15}, under the same condition, defined as follows. 
\begin{definition}[Sufficiently scattered condition - SSC]  \label{app:def:ssc}  
The matrix $H \in \mathbb{R}^{r \times n}_+$ is sufficiently scattered if the following two conditions are satisfied:\index{sufficiently scattered condition!definition} 

[SSC1]  $\mathcal{C} = \{x \in \mathbb{R}^r_+ \ | \ e^\top x \geq \sqrt{r-1} \|x\|_2 \} \; \subseteq \; \cone(H)$.

[SSC2]  There does not exist any orthogonal matrix $Q$ such that $\cone(H) \subseteq \cone(Q)$, except for permutation matrices. 
\end{definition} 

\revise{
The SSC requires that the columns of $H$ are sufficiently spread within the nonnegative orthant $\mathbb R^r_+$ so as to contain the cone $\mathcal C$ that is tangent to every facet of $\mathbb R^r_+$; see, e.g., the discussion in~\cite{fu2019nonnegative}, and  also~Appendix~\ref{app:SSCrownorm} for visual examples.  
}

\begin{theorem}\cite{tatli2021polytopic}
	\label{th:uniquemaxvol}
	Let $X=WH$ be a MaxVol NMF of $X$ of size $r = \rank(X)$, in the sense of~\eqref{eq:exactmaxvol}. If $H$ satisfies the SSC, then MaxVol NMF $(W,H)$ of $X$ is essentially unique.
\end{theorem}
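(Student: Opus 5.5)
Since $r=\rank(X)$ and $X=WH$ with $W\in\R^{m\times r}$ and $H\in\R^{r\times n}$, both $W$ and $H$ have rank $r$. Any other feasible factorization $X=W'H'$ of size $r$ therefore satisfies $\col(W')=\col(X)=\col(W)$ and $\text{row}(H')=\text{row}(H)$. Hence there is an invertible $Q\in\R^{r\times r}$ with $H'=QH$ and $W'=WQ^{-1}$. Essential uniqueness means that every optimal $H'$ equals $PH$ for some permutation matrix $P$; the scaling ambiguity disappears because of the sum-to-one constraint on the columns.

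The plan is to reduce to the known MinVol-type argument on $H$ alone. Feasibility of $H'$ requires $QH\geq 0$, i.e.\ every row $q_i^\top$ of $Q$ lies in the dual cone $\cone(H)^*$. It also requires $1_r^\top QH=1_n^\top=1_r^\top H$, and since $H$ has full row rank this gives $Q^\top 1_r=1_r$: the columns of $Q$ sum to one. The objective is $\det(H'H'^\top)=\det(Q)^2\det(HH^\top)$. Optimality of $H$ therefore forces $|\det Q|\le 1$ for every feasible $Q$, and any other optimum has $|\det Q|=1$. So it suffices to show that $Q$ feasible implies $|\det Q|\le 1$, with equality only if $Q$ is a permutation matrix.

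To bound $|\det Q|$, I would use SSC1 in dual form. Since $\mathcal{C}\subseteq\cone(H)$, we get $\cone(H)^*\subseteq\mathcal{C}^*=\{y : \|y\|_2\le 1_r^\top y\}$, the standard fact from \cite{fu15,lin15}. Each row therefore satisfies $\|q_i\|_2\le 1_r^\top q_i$. Hadamard's inequality and AM--GM then give
\[
|\det Q|\le\prod_i\|q_i\|_2\le\prod_i 1_r^\top q_i\le\Big(\tfrac{1}{r}\sum_i 1_r^\top q_i\Big)^r=\Big(\tfrac{1}{r}1_r^\top Q1_r\Big)^r=1,
\]
where the last step uses $Q^\top 1_r=1_r$, so $1_r^\top Q 1_r=r$. (If some $1_r^\top q_i$ were zero the bound is trivial.)

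The main obstacle is the equality case. Equality in Hadamard forces the rows $q_i$ to be mutually orthogonal. Equality in the dual-cone inequality forces each $q_i$ onto the boundary of $\mathcal{C}^*$, and equality in AM--GM forces all $1_r^\top q_i$ to be equal, hence equal to $1$. So $\|q_i\|_2=1$ and $Q$ is orthogonal. Then $QH\geq 0$ means $\cone(H)\subseteq\cone(Q^\top)$, with $Q^\top$ orthogonal, and SSC2 forces $Q^\top$, hence $Q$, to be a permutation matrix. This is exactly where SSC2 is used. The care needed is to check that the boundary case of $\mathcal{C}^*$ is handled correctly and that SSC2 applies to $Q^\top$ rather than $Q$. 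Finally, $W'=WQ^{-1}$ is then a column permutation of $W$, which completes the argument.
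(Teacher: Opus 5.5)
Your proof is correct and is essentially the paper's second justification (MaxVol reduces to MinVol identifiability through the reparametrization $H'=QH$, $W'=WQ^{-1}$, with objective $\det(Q)^2\det(HH^\top)$ over the same feasible set of $Q$); the only difference is that you reprove the cited MinVol lemma inline via Hadamard's inequality, the dual cone $\mathcal{C}^*$, AM--GM, and SSC2 for the equality case. You also make explicit, using $r=\rank(X)$, why every feasible size-$r$ factorization shares the row and column spaces of $(W,H)$, which the paper's equivalence argument in Section~\ref{sec:maxvolnmf} leaves implicit.
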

\begin{proof} 
\revise{This is a special case of the more general theorem for maximum-volume polytopic matrix factorization from~\cite{tatli2021polytopic}. This is also a corollary of MinVol NMF identifiability and the equivalence between minimizing $\det(W^\top W)$ and maximizing $\det(HH^\top)$ when $X=WH$; see above.} 
\end{proof}

\subsection{Behavior of MaxVol NMF}\label{sec:behaviormaxvol}

In the inexact case, we consider the following MaxVol NMF formulation: 
\begin{mini}|s|
{\scriptstyle W,H}
{f(W,H):=\frac{1}{2}\| X - WH \|_F^2 -\lambda\logdet(HH^\top+\delta I)\label{eq:nonexactmaxvolmf}}{}{}
\addConstraint{W\geq0,H\in\Delta^{r \times n}.}
\end{mini}  
It should be noted that, unlike MinVol NMF, from an optimization perspective, the \revise{parameter} $\delta$ in the $\logdet$ is not needed anymore. Maximizing the $\logdet$ \revise{penalizes rank deficiency in $H$}. \revise{In particular, when $\delta = 0$, we can guarantee that any optimal $H$ will be full rank since any rank-deficient $H$ would have an infinite objective function value}.   
Still, we keep $\delta$ in our model because it has some physical meaning; see \Cref{sec:normmaxvolnmf}.

To understand the main difference between MinVol NMF and MaxVol NMF, consider the asymptotic case when $\lambda$ goes to infinity. For MinVol NMF, $W$ will converge to the all-zero matrix~\cite{leplat19} for any $\delta > 0$, in order to minimize 
\[
\logdet(W^\top W + \delta I)  
= \sum_{i=1}^r \log (\sigma_i^2(W) + \delta), 
\]
where $\sigma_i(W)$ is the $i$th singular value of $W$.  For MaxVol NMF, as $\lambda$ goes to infinity, we can show that $H$ converges to a matrix whose rows are mutually orthogonal, and such that the $l_2$ norm of each row are as close to each other as possible. 
This is shown in Appendix~\ref{app:optsolA}. 
Due to this result, and assuming $n$ is a \revise{multiple} of $r$ (to simplify the presentation), that is, $n=dr$ for $d\in\mathbb{N}$, 
increasing $\lambda$ in \eqref{eq:nonexactmaxvolmf} will make $HH^\top$ converge to a diagonal matrix whose elements are all equal to $d$. 
In other words, the rows of $H$ will be mutually orthogonal while the simplex constraint on the columns of $H$ imposes that $H(i,j)\in\{0,1\}$. The squared $\ell_2$  norm of each row is equal to the number of non-zero elements in the corresponding row, $d$. 
From the HU point of view, one pixel will be assigned to only one material, this is a hard clustering where every cluster \emph{must have the same size}. 
The clustering behavior of MaxVol NMF is interesting and offers more control over the sparsity of the decomposition than MinVol NMF. Also, maximizing the volume of $H$ indirectly minimizes the volume of $W$ without the drawback of potentially setting a useful endmember to zero due to its low reflectance or linear dependence with other endmembers. 
However, the fact that increasing $\lambda$ leads to a clustering with clusters of the same size is a clear weakness. For this reason, we will introduce a normalized MaxVol NMF variant, N-MaxVol NMF, in \Cref{sec:normmaxvolnmf} that allows uneven clusters when the penalty parameter $\lambda$ goes to infinity. 

\Cref{fig:samsonmaxvollambdas} shows an experiment for the Samson data set. Increasing $\lambda$ intensifies the clustering, until a hard clustering is achieved with $\lambda=50$. Increasing $\lambda$ removes some of the false positives for water, but not all of them. 
The pixels that are wrongly assigned to water are actually pixels that are in the shadow, which makes their reflectance low. The simplex structure of $H$ prevents the model from assigning pixels to their correct endmember but to one with lower magnitude (in this case, water).  
Also, the improvement of the abundance map of the water is at the cost of a hard clustering, while a soft clustering would be preferable to properly unmix soil and tree. 
In \Cref{sec:normmaxvolnmf}, we will present an improved variant of MaxVol NMF to mitigate this issue.  
\begin{figure}[htb]
\centering
    \begin{subfigure}{\linewidth}
        \centering
        \caption{$\lambda=0.5$}
        \fbox{\includegraphics[width=\linewidth]{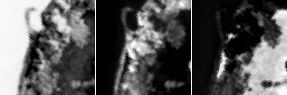}}
    \end{subfigure}

    \vspace{0.2cm}

    \begin{subfigure}{\linewidth}
        \centering
        \caption{$\lambda=5$}
        \fbox{\includegraphics[width=\linewidth]{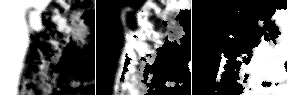}}
    \end{subfigure}

    \vspace{0.2cm}

    \begin{subfigure}{\linewidth}
        \centering
        \caption{$\lambda=10$}
        \fbox{\includegraphics[width=\linewidth]{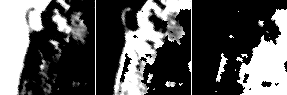}}
    \end{subfigure}

    \vspace{0.2cm}

    \begin{subfigure}{\linewidth}
        \centering
        \caption{$\lambda=50$}
        \fbox{\includegraphics[width=\linewidth]{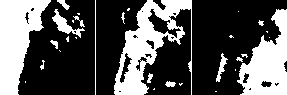}}
    \end{subfigure}
\caption{Abundance maps of MaxVol NMF on Samson, depending on $\lambda$.}
\label{fig:samsonmaxvollambdas}
\end{figure}


\begin{remark}
    \label{remark:samsonmaxvollambdas}
    About the results in \Cref{fig:samsonmaxvollambdas}:
    \begin{itemize}
        \item $\lambda$ is tuned using \cite{nguyen2024towards}, where convergence is assumed when the relative difference of the objective function between two successive iterates is below $10^{-4}$, and where the maximum variation of $\lambda$ is capped to $10\%$. 
        \item In \Cref{sec:solvemaxvol} we show two different algorithms to solve MaxVol NMF. The abundance maps displayed on \Cref{fig:samsonmaxvollambdas} are the same regardless of the used algorithm, except for $\lambda=50$ where the adaptive gradient method crashes, probably due to some numerical issues. The ADMM based algorithm still works well with $\lambda=50$.
    \end{itemize}
\end{remark}

\section{Solving MaxVol NMF} \label{sec:solvemaxvol}

The most common strategy to solve matrix factorizations problems such as MaxVol NMF ~\eqref{eq:nonexactmaxvolmf} is to use block coordinate descent schemes. 
Here, we consider two blocks: $W$ and $H$. Compared to MinVol NMF~\eqref{eq:approxminvol}, the main difficulty in solving~\eqref{eq:nonexactmaxvolmf} holds in the term $-\lambda\logdet(\cdot)$. 

For MinVol NMF, since $X\rightarrow \logdet(X)$ is concave, it is possible to derive a quadratic majorizer  relatively to $W$ whose gradient is Lipschitz continuous, and then use gradient descent on this majorizer~\cite{fu2016robust, leplat19}. In fact, the first-order Taylor approximation $\logdet(X)$ at the current iterate provides such a majorizer. 

Unfortunately, $-\logdet(.)$ is not concave, which prevents from using the aforementioned updating strategy for~\eqref{eq:nonexactmaxvolmf}. 
In this section, we propose two algorithms to solve~\eqref{eq:nonexactmaxvolmf}. The first algorithm in~\cref{sec:adgrad} is adapted from~\cite{malitsky2020adaptive}. Its core idea is to approximate the local Lipschitzness by using the previous iterate and to compute the corresponding Lipschitz gradient descent. The second algorithm is based on the Alternating Direction Method of Multipliers (ADMM).
The reason we introduce two algorithms is as follows: 
although ADMM will work better on average, we will be able to adapt the first algorithm to our new proposed N-MaxVol NMF model from \Cref{sec:normmaxvolnmf}. 


\subsection{Adaptive accelerated gradient descent}
\label{sec:adgrad}

Our first proposed algorithm for~\eqref{eq:nonexactmaxvolmf} relies on~\cite[Alg.~2]{malitsky2020adaptive}. This algorithm uses the previous iterate to approximate the local Lipschitzness and derive an appropriate step size. The previous iterate is also used to induce some extrapolation. The only knowledge that is needed from the objective function is its gradient. It should be noted that~\cite[Alg. 2]{malitsky2020adaptive} is only designed to update all variables at once. 
In our case, it would mean that $[\Wt,H]\in\R^{r\times(m+n)}$ should be updated all at once. Most gradient-based algorithms for constrained matrix factorization are using a two block alternating strategy, performing several updates on $W$ and then several updates on $H$. By doing so, computational savings are possible by precomputing some quantities  that remain unchanged during the update of one block (e.g., $XH^\top$ and $HH^\top$ when updating $W$). 
We will follow this common two-block strategy and, all we need are the gradients of MaxVol NMF~\eqref{eq:nonexactmaxvolmf}: 
\begin{align*}
    \nabla f(W) & = \frac{\partial f}{\partial W}(W) = (WH-X)H^\top, \\
    \nabla f(H) &= \frac{\partial f}{\partial H}(H) = W^\top(WH\text{$-$}X)\text{$-$}2\lambda(HH^\top\text{$+$}\delta I)^{-1}H. 
\end{align*} 
Our adaptation of~\cite[Alg. 2]{malitsky2020adaptive} with a two-block strategy is given in~\cref{alg:Adgrad2}\revise{, where $\Hold$ denotes the previous iterate, $H$ the current iterate, and $\overbar{H}$ the extrapolation of the current iterate.} 
\begin{algorithm}[htbp!]
\caption{Adgrad2}
\label{alg:Adgrad2}
\DontPrintSemicolon
\KwIn{data matrix $X \in \mathbb{R}^{m \times n}$, initial factors $\Wold \in \mathbb{R}_+^{m \times r}$  and $\Hold \in \Delta^{r \times n}$}
$\Gamma_{\Wold}=\|\Hold \Hold^\top\|,\gamma_{\Wold}={\Gamma_{\Wold}}^{-1},\theta_W=\Theta_W=10^9,\Wextraold=\Wold,\overbar{W}=W=[\Wold-10^{-6}\nabla f(\Wold)]_+$\;
$\Gamma_{Ho}=\|\Wold^\top \Wold\|,\gamma_{Ho}={\Gamma_{Ho}}^{-1},\theta_H=\Theta_H=10^9,\Hextraold=\Hold,\overbar{H}=H=[\Hold-10^{-6}\nabla f(\Hold)]_{\Delta^{r \times n}}$\;
\For{$k=1,2,\dots$\nllabel{alg:Adgrad2:line:outerloop}}{
    \While{stopping criteria not satisfied\nllabel{alg:Adgrad2:line:Winnerloop}}{
        $\gamma_{W}=\min\left(\gamma_{\Wold}\sqrt{1+\frac{\theta_W}{2}},\frac{\|\Wextra-\Wextraold\|_F}{2\|\nabla f(\Wextra)-\nabla f(\Wextraold)\|_F}\right)$\;
        $\Gamma_{W}=\min\left(\Gamma_{\Wold}\sqrt{1+\frac{\Theta_W}{2}},\frac{\|\nabla f(\Wextra)-\nabla f(\Wextraold)\|_F}{2\|\Wextra-\Wextraold\|_F}\right)$\;
        $W=[\Wextra-\gamma_{W}\nabla f(\Wextra)]_+$\;
        $\theta_W=\gamma_{W}/\gamma_{\Wold},\Theta_W=\Gamma_{W}/\Gamma_{\Wold}$\;
        $\Wextraold=\Wextra$\;
        $\Wextra=W+\frac{1-\sqrt{\gamma_{W}\Gamma_{W}}}{1+\sqrt{\gamma_{W}\Gamma_{W}}}(W-\Wold)$\;
        $\Wold=W$\;
        $\gamma_{\Wold}=\gamma_{W},\Gamma_{\Wold}=\Gamma_{W}$\;
    }
    \While{stopping criteria not satisfied\nllabel{alg:Adgrad2:line:Hinnerloop}}{
        $\gamma_{H}=\min\left(\gamma_{\Hold}\sqrt{1+\frac{\theta_H}{2}},\frac{\|\Hextra-\Hextraold\|_F}{2\|\nabla f(\Hextra)-\nabla f(\Hextraold)\|_F}\right)$\;
        $\Gamma_{H}=\min\left(\Gamma_{\Hold}\sqrt{1+\frac{\Theta_H}{2}},\frac{\|\nabla f(\Hextra)-\nabla f(\Hextraold)\|_F}{2\|\Hextra-\Hextraold\|_F}\right)$\;
        $H=[\Hextra-\gamma_{H}\nabla f(\Hextra)]_{\Delta^{r \times n}}$\label{alg:lineupdateH}\;
        $\theta_H=\gamma_{H}/\gamma_{\Hold},\Theta_H=\Gamma_{H}/\Gamma_{\Hold}$\;
        $\Hextraold=\Hextra$\;
        $\Hextra=H+\frac{1-\sqrt{\gamma_{H}\Gamma_{H}}}{1+\sqrt{\gamma_{H}\Gamma_{H}}}(H-\Hold)$\;
        $\Hold=H$\;
        $\gamma_{\Hold}=\gamma_{H},\Gamma_{\Hold}=\Gamma_{H}$\;
    }
}
\end{algorithm}

\begin{remark}
    The adaptive part is mostly useful for the update of $H$. In order to update $W$, which requires to solve a convex nonnegative least squares problem, any other algorithm could be used instead of the \textbf{while} 
    loop in~\cref{alg:Adgrad2:line:Winnerloop}.
\end{remark}

\subsection{Alternating direction method of multipliers (ADMM)}
\label{sec:admm}

Let us introduce the variable $Y = HH^\top \in \mathbb{R}^{r \times r}$ and let $\Lambda \in \mathbb{R}^{r \times r}$ be the Lagrange multipliers corresponding to this constraint, and consider the following ADMM reformulation of~\eqref{eq:nonexactmaxvolmf}:
\begin{equation}\label{eq:admmmaxvol}
\begin{array}{cl}
    \underset{\scriptstyle W,H,Y,\Lambda}{\min}  & \La(W,H,Y,\Lambda):=\frac{1}{2}\| X - WH \|_F^2 \\
    & \quad-\lambda\logdet(Y+\delta I)+\langle Y-HH^\top,\Lambda\rangle \\
    & \quad +\frac{\rho}{2}\|Y-HH^\top\|_F^2 \\
    &\\
    \text{s.t.}  & W\geq0,H\in\Delta^{r \times n}.
\end{array}
\end{equation}
ADMM consists of the following updates~\cite{bertsekas2016nonlinear}: 
\begin{align}
    W^{k+1} &= \argmin_{W\geq0}\La(W,H^k,Y^k,\Lambda^k) \label{eq:admm_W}\\
    H^{k+1} &= \argmin_{H\revise{\in} \Delta^{r \times n}}\La(W^{k+1},H,Y^k,\Lambda^k) \label{eq:admm_H}\\
    Y^{k+1} &= \argmin_{Y}\La(W^{k+1},H^{k+1},Y,\Lambda^k) \label{eq:admm_Y}\\
    \Lambda^{k+1} &= \Lambda^{k} + \rho(Y^{k+1}-H^{k+1}{H^{k+1}}^\top) \label{eq:admm_Lambda}
\end{align}

\paragraph{Updating $W$}

Like in~\cref{sec:adgrad}, the update for $W$ can be computed through any algorithm for constrained convex problems, as~\eqref{eq:admm_W} is equivalent to $$W^{k+1} = \argmin_{W\geq0} \frac{1}{2}\|X-WH^{k}\|^2_F.$$ 
Here we propose to use TITAN~\cite{hien2023inertial}, an extrapolated first-order algorithm with convergence guarantees, like in \cite{vuthanh2021inertial}. The resulting update for $W^{k+1}$ is detailed in~\cref{alg:admm_titan_W} 
\begin{algorithm}[htb!]
\caption{Update of $W$ with TITAN}
\label{alg:admm_titan_W}
\DontPrintSemicolon
\KwIn{$\alpha_1,X,H^k,W,\Wold$}
\KwOut{$W$}
$L_W=\|H^k{H^k}^\top\|$\;
\While{stopping criteria not satisfied}{
    $\alpha_0=\alpha_1$\;
    $\alpha_1=\frac{1}{2}(1+\sqrt{1+4\alpha_0^2})$\;
    $\beta=\frac{\alpha_0-1}{\alpha_1}$\;
    $\Wextra=W+\beta(W-\Wold)$\;
    $\Wold=W$\;
    $W=[\Wextra+\frac{1}{L_W}(XH^\top-\Wextra H^k{H^k}^\top)]_+$
}
\end{algorithm}

\paragraph{Updating $H$}

We propose two ways of updating $H$. The first one consists of solving directly~\eqref{eq:admm_H} with the adaptive accelerated gradient descent algorithm described in~\cref{sec:adgrad}. 
The second one consists of deriving a non-Euclidean gradient method. We derive a Bregman surrogate of $H\rightarrow\La(W^{k+1},H,Y^k,\Lambda^k):=\La(H)$ and update $H$ by minimizing this surrogate. 
The main motivation to use such a surrogate is that there does not exist a Lipschitz surrogate of $\La$ relatively to $H$. The gradient of $H\rightarrow\La(W^{k+1},H,Y^k,\Lambda^k)$ is clearly not Lipschitz continuous because the gradient of \revise{$\|Y-HH^\top\|_F^2$} relatively to $H$ is cubic. Although $H\rightarrow\La(H)$ is not $L$-smooth, using the framework of~\cite{bauschke2017descent}, we can show that it is smooth relatively to the quartic norm kernel proposed in~\cite{dragomir2021quartic}.
\begin{definition}[Bregman distance] Given a convex function  $h$, dubbed a distance kernel, the corresponding Bregman distance is defined as 
    $$D_h(x,y)=h(x)-h(y)-\langle\nabla h(y),x-y\rangle.$$
\end{definition}
Note that $D_h$ is not a proper distance as it is asymmetric. 
\begin{definition}[Relative smoothness~\cite{bauschke2017descent}]
    We say that a differentiable function $f:\R^{r\times n}\rightarrow\R$ is $L$-smooth relatively to the distance kernel $h$ if there exists $L>0$ such that for every $X,Y\in\R^{r\times n}$,
    $$f(X)\leq f(Y) + \langle\nabla f(Y),X-Y\rangle+L D_h(X,Y).$$
    If $f$ is twice differentiable, $L$-smoothness relatively to $h$ is equivalent to 
    $$\nabla^2f(X)[U,U]\leq L\nabla^2 h(X)[U,U]\quad\forall X,U\in\R^{r\times n},$$
    where $\nabla^2f(X)[U,U]$ denotes the second derivative of $f$ at $X$ in the direction $U$.
\end{definition}
First, we focus on the relative smoothness of the quartic term. According to~\cite{dragomir2021quartic} we have
\begin{multline}
	\label{eq:bregsurroquarticpart}
	\frac{1}{2}\|Y-HH^\top\|_F^2:=g(H)\leq g(H^k)+\langle\nabla g(H^k),H-H^k\rangle \\+ D_h(H,H^k), 
\end{multline}
where $\nabla g(H^k)=2(H^k{H^k}^\top-Y)H^k$, 
and $h(H)=\frac{\alpha}{4}\|H\|_F^4+\frac{\sigma}{2}\|H\|_F^2$ with $\alpha=6$ and $\sigma=2\|Y\|_2$. Substituting~\eqref{eq:bregsurroquarticpart} in~\eqref{eq:admmmaxvol}, 
\begin{multline}
	\label{eq:lagrangianquarticsurrogate}
	\La(H)\leq u_{H^k}(H):=\frac{1}{2}\|X-WH\|_F^2-\langle HH^\top,\Lambda\rangle\\ +\rho\langle\nabla g(H^k),H\rangle+\rho h(H)-\rho\langle\nabla h(H^k),H \rangle+C_H
\end{multline}
where $C_H$ is a constant relatively to $H$. Let us compute the second directional derivative of $u_{H^k}$,
\begin{align}
    \nabla^2 u_{H^k}(H)[U,U]
        & =  \langle(W^\top W-2\Lambda^\top)U,U\rangle+\rho\sigma\|U\|_F^2\nonumber\\
            & \qquad +\rho\alpha(\|H\|_F^2\|U\|_F^2+2\langle H,U\rangle^2) \nonumber\\
        & \leq \rho\alpha(\|H\|_F^2\|U\|_F^2+2\langle H,U\rangle^2) \nonumber\\
            & \qquad +(\|W^\top W-2\Lambda^\top\|_2+\rho\sigma)\|U\|_F^2 \nonumber\\
        & = \nabla^2\left(\frac{\tilde{\alpha}}{4}\|H\|_F^4+\frac{\tilde{\sigma}}{2}\|H\|_F^2\right)[U,U],\label{eq:kernel1smooth}
\end{align}
where $\tilde{\alpha}=\rho\alpha$ and $\tilde{\sigma}=\rho\sigma+\|W^\top W-2\Lambda^\top\|_2$.
From~\eqref{eq:kernel1smooth} and~\eqref{eq:lagrangianquarticsurrogate}, $H\rightarrow\La(H)$ is 1-smooth relatively to the kernel $\tilde{h}:H\rightarrow\frac{\tilde{\alpha}}{4}\|H\|_F^4+\frac{\tilde{\sigma}}{2}\|H\|_F^2$. More explicitly, 
\begin{equation*}
	\La(H)\leq u_{H^k}(H^k)+\langle\nabla u_{H^k}(H^k),H-H^k\rangle+D_{\tilde{h}}(H,H^k).
\end{equation*}

The update for $H$ is then obtained by minimizing the aforementioned surrogate 
\begin{align}
	H^{k+1}&=\argmin_{H\in\Delta^{r \times n}}\left\{ \langle\nabla u_{H^k}(H^k),H\rangle\text{$+$}\tilde{h}(H)\text{$-$}\langle\nabla \tilde{h}(H^k),H\rangle \right\}, \nonumber\\
	\label{eq:Hminbregsurro}
	&=\argmin_{H\in\Delta^{r \times n}}\left\{t_k(H):=\tilde{h}(H)-\langle Q^k,H\rangle \right\},
\end{align}
where $Q^k=\nabla \tilde{h}(H^k)-\nabla u_{H^k}(H^k)$. This is equivalent to the Bregman proximal iteration map described in~\cite{dragomir2021quartic} with a step size equal to 1.

\begin{corollary}
	\label{cor:bregupdateHform}
	The solution of~\eqref{eq:Hminbregsurro} is of the form $$H^{k+1}=\frac{1}{\tilde{\alpha}\|H^{k+1}\|_F^2+\tilde{\sigma}}[Q^k-1_r\nu^\top]_+,$$
	where $\nu\in\R^{n}$.
\end{corollary}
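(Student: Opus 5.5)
The plan is to read off the form of $H^{k+1}$ from the KKT conditions of the convex problem~\eqref{eq:Hminbregsurro}. First, $t_k(H)=\frac{\tilde{\alpha}}{4}\|H\|_F^4+\frac{\tilde{\sigma}}{2}\|H\|_F^2-\langle Q^k,H\rangle$ is strictly convex, since $\tilde{\sigma}>0$ and $\|H\|_F^4$ is a convex increasing function of the convex function $\|H\|_F^2$. It is also differentiable, and the feasible set $\Delta^{r\times n}$ is a nonempty compact polyhedron. Hence a unique minimizer $H^{k+1}$ exists. Because the constraints are linear, no further constraint qualification is needed: the KKT conditions are necessary and sufficient.

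Next I write the Lagrangian with multipliers $\nu\in\R^n$ for the equality constraints $1_r^\top H=1_n^\top$ and $M\in\R^{r\times n}_+$ for $H\geq 0$:
\[
t_k(H)+\langle \nu, H^\top 1_r-1_n\rangle-\langle M,H\rangle .
\]
The gradient of $t_k$ is $\nabla t_k(H)=(\tilde{\alpha}\|H\|_F^2+\tilde{\sigma})H-Q^k$. Setting $c:=\tilde{\alpha}\|H^{k+1}\|_F^2+\tilde{\sigma}>0$, stationarity reads
\[
cH^{k+1}=Q^k-1_r\nu^\top+M .
\]
Complementary slackness gives $M\geq0$, $H^{k+1}\geq0$ and $M\odot H^{k+1}=0$.

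I then argue entrywise. Fix $(i,j)$. If $H^{k+1}_{ij}>0$, then $M_{ij}=0$, so $cH^{k+1}_{ij}=Q^k_{ij}-\nu_j$, and this quantity is positive, hence equals $[Q^k_{ij}-\nu_j]_+$. If $H^{k+1}_{ij}=0$, then $0=Q^k_{ij}-\nu_j+M_{ij}$ with $M_{ij}\ge0$, so $Q^k_{ij}-\nu_j\le0$ and $[Q^k_{ij}-\nu_j]_+=0=cH^{k+1}_{ij}$. In both cases $cH^{k+1}=[Q^k-1_r\nu^\top]_+$, and dividing by $c$ gives the claimed formula.

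The only delicate point is that the scaling factor depends on $H^{k+1}$ itself, so the formula is implicit rather than closed-form. This is harmless for the statement, which only asserts the form of the solution. One should still stress that $c$ is a positive scalar evaluated at the optimum, which is legitimate because the gradient of the quartic term is a scalar multiple of $H$. If one wanted an explicit computation, the natural next step would be a one-dimensional search: for each fixed $c$, $\nu_j$ is determined by the simplex projection condition $1_r^\top[Q^k(:,j)-\nu_j1_r]_+=c$, and $c$ would then be found from the scalar fixed-point equation $c=\tilde{\alpha}\|[Q^k-1_r\nu^\top]_+\|_F^2/c^2+\tilde{\sigma}$. That is beyond what the corollary claims.
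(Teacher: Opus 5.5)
Your proposal is correct and follows the same route as the paper. Both write the KKT conditions of~\eqref{eq:Hminbregsurro} and read off $H^{k+1}$ entrywise from stationarity and complementary slackness. You split cases on whether $H^{k+1}_{ij}>0$ or $H^{k+1}_{ij}=0$ (two cases) instead of on the sign of $Q^k_{ij}-\nu_j$ (three cases), and you also justify existence of the minimizer and necessity of KKT via the linear constraints, which the paper leaves implicit.

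One small nit: $\tilde{\sigma}\geq 0$ need not be strictly positive. Nothing is lost, though: $\tilde{\alpha}>0$ already makes $t_k$ strictly convex, and $\|H^{k+1}\|_F^2\geq n/r$ on $\Delta^{r\times n}$ gives $c>0$.
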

\begin{proof}
Consider the Lagrangian of~\eqref{eq:Hminbregsurro}
$$\La_{t_k}(H,\Lambda,\nu)=t_k(H)-\langle H,\Lambda\rangle+\langle H^\top 1_r-1_n,\nu\rangle$$
where $\Lambda\in\R^{r\times n}_+$ and $\nu\in\R^n$. According to the KKT optimality conditions:

\begin{empheq}[left=\empheqlbrace]{align}
	H^{k+1} &\in \Delta^{r \times n}, \\
	\langle\Lambda^*,H^{k+1}\rangle &= 0, \label{eq:kkt12}\\
	\nabla t_k(H^{k+1})-\Lambda^*+1_r{\nu^*}^\top &= 0, \label{eq:kkt13}  
\end{empheq}

\begin{empheq}[left=\Leftrightarrow\empheqlbrace]{align}
	H^{k+1} &\in \Delta^{r \times n}, \\
	\langle\nabla\tilde{h}(H^{k+1})-Q^k+1_r{\nu^*}^\top,H^{k+1}\rangle &= 0, \label{eq:kkt22}\\
	\nabla\tilde{h}(H^{k+1})-Q^k+1_r{\nu^*}^\top &\geq 0, \label{eq:kkt23}
\end{empheq}
where~\eqref{eq:kkt22} is coming from substituting~\eqref{eq:kkt13} in~\eqref{eq:kkt12}, and~\eqref{eq:kkt23} is coming from the fact that $\Lambda^*\geq0$. First, combining~\eqref{eq:kkt22} and~\eqref{eq:kkt23}, we have
\begin{equation}
	\label{eq:kkt_hadamard}
	(\nabla\tilde{h}(H^{k+1})-Q^k+1_r{\nu^*}^\top) \circ H^{k+1} = 0, 
\end{equation}
where $\circ$ is the Hadamard product. For all $p$ in $1,\dots,r$, for all $j$ in $1,\dots,n$,
\begin{enumerate}
	\item if $Q^k(p,j)-\nu^*_j<0$, $\nabla\tilde{h}(H^{k+1})(p,j)-(Q^k(p,j)-\nu^*_j)>0$ because $\nabla\tilde{h}(H)=(\tilde{\alpha}\|H\|_F^2+\tilde{\sigma})H\geq0$, then \eqref{eq:kkt_hadamard} $\Rightarrow H^{k+1}(p,j)=0$,
	
	\item if $Q^k(p,j)-\nu^*_j=0$, $\nabla\tilde{h}(H^{k+1})(p,j)=(\tilde{\alpha}\|H^{k+1}\|_F^2+\tilde{\sigma})H^{k+1}(p,j)$ so~\eqref{eq:kkt_hadamard} $\Rightarrow H^{k+1}(p,j)=0$,
	
	\item if $Q^k(p,j)-\nu^*_j>0$, $\nabla\tilde{h}(H^{k+1})(p,j)=(\tilde{\alpha}\|H^{k+1}\|_F^2+\tilde{\sigma})H^{k+1}>0$ by~\eqref{eq:kkt23}, then~\eqref{eq:kkt_hadamard} $\Rightarrow \nabla\tilde{h}(H^{k+1})(p,j)-(Q^k(p,j)-\nu^*_j)=0\Leftrightarrow H^{k+1}(p,j)=\frac{Q^k(p,j)-\nu^*_j}{\tilde{\alpha}\|H^{k+1}\|_F^2+\tilde{\sigma}}$.
\end{enumerate}

In the end, $H^{k+1}=\frac{1}{\tilde{\alpha}\|H^{k+1}\|_F^2+\tilde{\sigma}}[Q^k-1_r{\nu^*}^\top]_+$.

\hfill $\square$ 
\end{proof}
In particular, $\nu$ in~\Cref{cor:bregupdateHform} is such that $1_r^\top[Q^k-1_r{\nu}^\top]_+=(\tilde{\alpha}\|H^{k+1}\|_F^2+\tilde{\sigma})1_n^\top\in\R^n$ since $1_r^\top H^{k+1}=1_n^\top$. In other words, $[Q^k-1_r{\nu}^\top]_+$ projects $Q$ on a scaled probability simplex where the scaling is equal to $\tilde{\alpha}\|H^{k+1}\|_F^2+\tilde{\sigma}$. How do we find $\nu$ since it depends on $H^{k+1}$? We propose to solve this inexactly with a simple fixed point algorithm where $\|H^{k+1}\|_F^2$ is the variable to optimize. The idea is that when $\|H^{k+1}\|_F^2$ is fixed, $\nu$ has a closed form solution. So for a fixed $\|H^{k+1}\|_F^2$ we compute $\nu$, then we update $\|H^{k+1}\|_F^2$ according to the new $\nu$ and repeat this process. The algorithm is described in~\Cref{alg:Hminbregsurro}. When $\|H^{k+1}\|_F^2$ is fixed, there are several algorithms that can compute exactly $\nu$. In~\cite{held1974validation}, the proposed algorithm requires to sort the entries of each column of $Q^k$. The main computational cost of this algorithm is this sorting. Once the sorting is completed, $\nu$ is found just by computing $n$ times the $\max$ between $r$ entries, which is linear. There exist faster algorithms like~\cite{condat2016fast} that do not rely on sorting. However, note that $Q^k$ is not changing in~\Cref{alg:Hminbregsurro}. Hence, using~\cite{held1974validation} to compute $\nu$ in \cref{alg:Hminbregsurro:nuline} only has a linear complexity if $Q^k$ is sorted only once before the \textbf{while} loop. In our code, $\epsilon$ is fixed to $10^{-6}$ and the \textbf{while} loop cannot exceed 100 iterations.

\begin{algorithm}[htb!]
	\caption{Algorithm to update $H$ in~\eqref{eq:Hminbregsurro}}
	\label{alg:Hminbregsurro}
	\SetKwInOut{Init}{init}
	\DontPrintSemicolon
	\KwIn{$Q^k,\tilde{\alpha},\tilde{\sigma}$}
	\Init{$\|H^{k+1}\|_F^2,\nu$}
	\KwOut{$H^{k+1}$}
	\While{$\frac{\|H^{k+1}\|_F^2-\left\|\frac{1}{\tilde{\alpha}\|H^{k+1}\|_F^2+\tilde{\sigma}}[Q^k-1_r\nu^\top]_+\right\|_F^2}{\|H^{k+1}\|_F^2}>\epsilon$}{
		compute $\nu$ such that $\frac{1}{\tilde{\alpha}\|H^{k+1}\|_F^2+\tilde{\sigma}}[Q^k-1_r\nu^\top]_+\in\Delta^{r\times n}$\label{alg:Hminbregsurro:nuline}\;
		update $\|H^{k+1}\|_F^2$ to $\left\|\frac{1}{\tilde{\alpha}\|H^{k+1}\|_F^2+\tilde{\sigma}}[Q^k-1_r\nu^\top]_+\right\|_F^2$\;
	}
	$H^{k+1}=\frac{1}{\tilde{\alpha}\|H^{k+1}\|_F^2+\tilde{\sigma}}[Q^k-1_r\nu^\top]_+$\;
\end{algorithm}

\paragraph{Updating $Y$}

The ADMM update of $Y^{k+1}$ is 
\begin{equation*}
Y^{k+1}=\argmin_{Y\succ-\delta I}-\lambda\logdet(Y+\delta I)+\langle Y,\Lambda\rangle+\frac{\rho}{2}\|Y-HH^\top\|_F^2.
\end{equation*}
Consider the change of variable $Z=Y+\delta I$,
\begin{multline}
Y^{k+1}+\delta I=\argmin_{Z\succ0}-\lambda\logdet(Z)\\+\frac{\rho}{2}\left\|Z-\left(HH^\top+\delta I -\frac{1}{\rho}\Lambda\right)\right\|_F^2. \label{eq:admmYafterchangevar}
\end{multline}
According to~\cite[Lemma 2.1]{wang2010solving},~\eqref{eq:admmYafterchangevar} has a closed form solution which is $$\Phi_{\frac{\lambda}{\rho}}^+\left(HH^\top+\delta I -\frac{1}{\rho}\Lambda\right)$$ where $\Phi_\gamma^+(x)=\frac{1}{2}(\sqrt{x^2+4\gamma}+x)$ and for a symmetric $A$ with an \revise{eigenvalue} decomposition $A=PDP^\top$ and $D=\diag(d)$, $\Phi_\gamma^+(A)=P\diag(\Phi_\gamma^+(d))P^\top$ where $\Phi_\gamma^+(d)$ is applied element-wise. In the end,
$$Y^{k+1}=\Phi_{\frac{\lambda}{\rho}}^+\left(HH^\top+\delta I -\frac{1}{\rho}\Lambda\right)-\delta I.$$


\subsection{Comparison of the two algorithms}

We now compare the two proposed algorithms for MaxVol NMF, both on synthetic and real datasets. The results are averaged over 10 runs and are presented on \Cref{fig:algos}. For the synthetic dataset, $W$ is drawn following a uniform distribution in $[0,1]$ and $H$ is such that each of its column is drawn following a Dirichlet distribution where the concentration parameters are all equal to $0.2$. The input matrix is $X=WH$. A different $X$ is drawn at each run. The compared algorithms are Adgrad2 (\Cref{sec:adgrad}), ADMM (\Cref{sec:admm}) and ADMM+Adgrad. ADMM+Adgrad has the same formulation as in~\eqref{eq:admmmaxvol}, but the update for $H$~\eqref{eq:admm_H} is performed using the adaptive gradient descent method instead of minimizing the proposed Bregman surrogate. Regardless of the dataset and of the algorithm, the number of iterations is fixed to 500, the number of inner iterations (that is, the number of times $H$ is updated before updating $W$, and vice-versa) is fixed to 20, $\lambda$ and $\delta$ are fixed to $1$. 
In \Cref{fig:algos}, on both synthetic data and Moffett, ADMM with $\rho=0.01$ has the best convergence speed and the lowest error. Still on synthetic data and Moffett, the proposed Bregman surrogate provides a nice approximation of the original ADMM formulation~\eqref{eq:admmmaxvol}. For equal $\rho$'s, ADMM always converges faster and to a lower error than ADMM+Adgrad. This experimentally justifies our choice for the use of a Bregman surrogate to update $H$ in the ADMM formulation of MaxVol NMF. However, this is at the cost of a higher computation time, due to \Cref{alg:Hminbregsurro}, as it can be seen in the reported average times in \Cref{tab:averagetimesynth}. 
One can always increase the tolerance threshold $\epsilon$ in \Cref{alg:Hminbregsurro}, but should remain careful. Let us increase $\epsilon$ to $10^{-3}$. The computation time of ADMM with $\rho=0.01$ is greatly reduced, as a run on the synthetic dataset only lasts 2.44s on average. However, for $\rho=0.1$ the algorithm diverges, as it can be seen on \Cref{fig:admm_bigger_epsi}, and the computation time is increased to 6.90s on average. Finally, ADMM is not always better than Adgrad2, like with Samson on \Cref{fig:algos_samson}. Reasons as to why one algorithm would be better than the other, and designing effective algorithms for MaxVol NMF are questions of further research. 


\begin{table}[htbp]
    \centering
    \begin{tabular}{l|c}
        Algorithm & Times (s)\\ \hline
        Adgrad2 & 3.67 \\ \hline
        ADMM+Adgrad $\rho=0.01$ & 2.88\\ \hline
        ADMM+Adgrad $\rho=0.1$ & 2.29\\ \hline
        ADMM $\rho=0.01$ & 5.33\\ \hline
        ADMM $\rho=0.1$ & 23.5
    \end{tabular}
    \caption{Average time per run on synthetic datasets.}
    \label{tab:averagetimesynth}
\end{table}

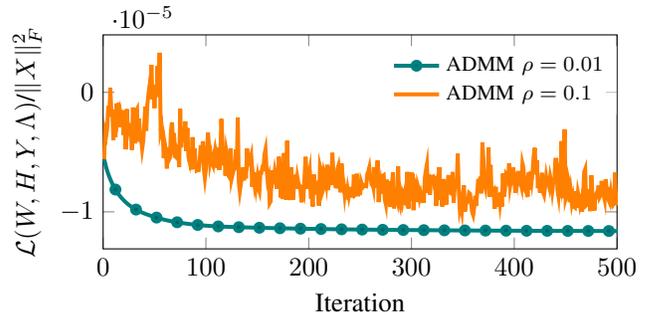
\begin{figure}[htbp!]
    \centering
    \begin{tikzpicture}
        \begin{axis}[
                    width=0.95\linewidth,
                    height=0.5\linewidth,
                    xmin = 0,
                    xmax = 500,
                    ylabel = {$\La(W,H,Y,\Lambda)$/$\|X\|_F^2$},
                    ytick = {0,-1e-5},
                    xlabel = {Iteration},
                    cycle list name=exotic,
                    mark size = 1.5pt,
                    mark repeat = 20,
                    legend cell align={left},
                    legend style={font=\footnotesize,at={(1,0.95)},anchor=north east,draw=none,fill opacity=0.5,text opacity=1}]
        \addplot+[mark phase=12,line width = 1.5pt] table[x expr=\coordindex+1, y index = 3]{maxvolnmf/figures/div_conv_mean_50_500_5.txt};
        \addplot+[mark = none,line width = 1.5pt] table[x expr=\coordindex+1, y index = 4]{maxvolnmf/figures/div_conv_mean_50_500_5.txt};
        \legend{ADMM $\rho=0.01$,ADMM $\rho=0.1$}
        \end{axis}
    \end{tikzpicture}
    \caption{ADMM on synthetic dataset with $\epsilon=10^{-3}$}
    \label{fig:admm_bigger_epsi}
\end{figure}

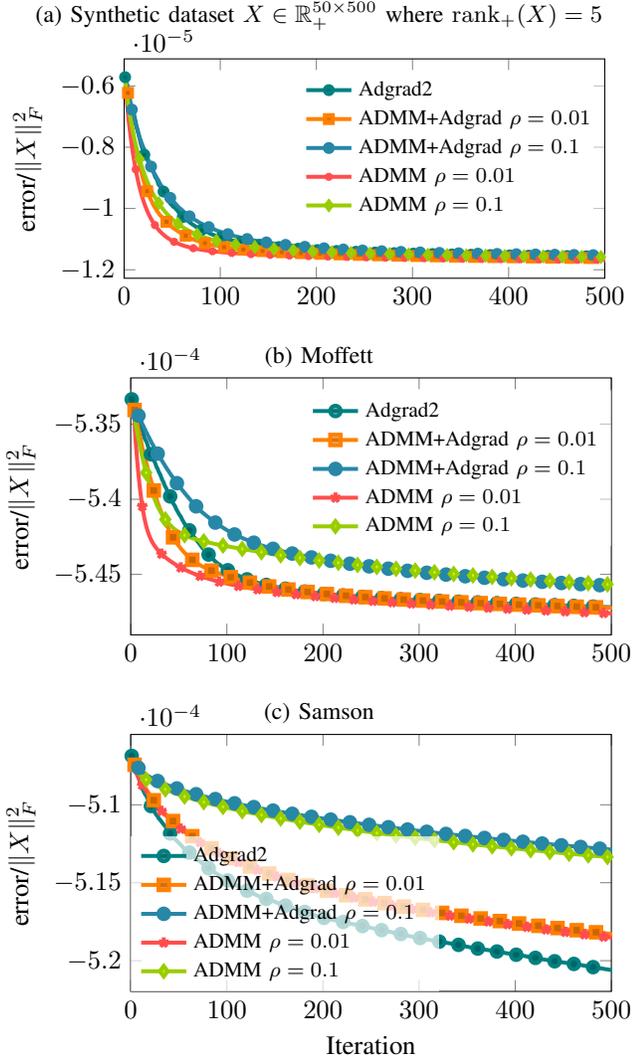
\begin{figure}[htbp!]
    \centering
    \begin{subfigure}{\linewidth}
        \centering
        \caption{Synthetic dataset $X\in\R^{50\times 500}_+$ where $\rank_+(X)=5$}
        \label{fig:algos_synth}
        \begin{tikzpicture}
            \begin{axis}[
                        width=0.9\linewidth,
                        height=4.5cm,
                        xmin = 0,
                        xmax = 500,
                        ylabel = error/$\|X\|_F^2$,
                        cycle list name=exotic,
                        mark size = 1.5pt,
                        mark repeat = 20,
                        legend cell align={left},
                        legend style={font=\footnotesize,at={(1,0.95)},anchor=north east,draw=none,fill opacity=0.5,text opacity=1}]
            \addplot+[mark phase= 0,line width = 1.5pt] table[x expr=\coordindex+1, y index = 0]{maxvolnmf/figures/conv_mean_50_500_5.txt};
            \addplot+[mark phase= 4,line width = 1.5pt] table[x expr=\coordindex+1, y index = 1]{maxvolnmf/figures/conv_mean_50_500_5.txt};
            \addplot+[mark phase= 8,line width = 1.5pt] table[x expr=\coordindex+1, y index = 2]{maxvolnmf/figures/conv_mean_50_500_5.txt};
            \addplot+[mark phase=12,line width = 1.5pt] table[x expr=\coordindex+1, y index = 3]{maxvolnmf/figures/conv_mean_50_500_5.txt};
            \addplot+[mark phase=16,line width = 1.5pt] table[x expr=\coordindex+1, y index = 4]{maxvolnmf/figures/conv_mean_50_500_5.txt};
            \legend{Adgrad2,ADMM+Adgrad $\rho=0.01$,ADMM+Adgrad $\rho=0.1$,ADMM $\rho=0.01$,ADMM $\rho=0.1$}
            \end{axis}
        \end{tikzpicture}
    \end{subfigure}

    \vspace{0.2cm}

    \begin{subfigure}{\linewidth}
        \centering
        \caption{Moffett}\vspace{-0.3cm}
        \label{fig:algos_moffett}
        \begin{tikzpicture}
            \begin{axis}[
                        width=0.9\linewidth,
                        height=5cm,
                        xmin = 0,
                        xmax = 500,
                        ylabel = error/$\|X\|_F^2$,
                        cycle list name=exotic,
                        mark size = 2pt,
                        mark repeat = 20,
                        legend cell align={left},
                        legend style={font=\footnotesize,at={(1,0.95)},anchor=north east,draw=none,fill opacity=0.5,text opacity=1}]
            \addplot+[mark phase= 0,line width = 1.5pt] table[x expr=\coordindex+1, y index = 0]{maxvolnmf/figures/conv_moffett.txt};
            \addplot+[mark phase= 4,line width = 1.5pt] table[x expr=\coordindex+1, y index = 1]{maxvolnmf/figures/conv_moffett.txt};
            \addplot+[mark phase= 8,line width = 1.5pt] table[x expr=\coordindex+1, y index = 2]{maxvolnmf/figures/conv_moffett.txt};
            \addplot+[mark phase=12,line width = 1.5pt] table[x expr=\coordindex+1, y index = 3]{maxvolnmf/figures/conv_moffett.txt};
            \addplot+[mark phase=16,line width = 1.5pt] table[x expr=\coordindex+1, y index = 4]{maxvolnmf/figures/conv_moffett.txt};
            \legend{Adgrad2,ADMM+Adgrad $\rho=0.01$,ADMM+Adgrad $\rho=0.1$,ADMM $\rho=0.01$,ADMM $\rho=0.1$}
            \end{axis}
        \end{tikzpicture}
    \end{subfigure}

    \vspace{0.2cm}

    \begin{subfigure}{\linewidth}
        \centering
        \caption{Samson}\vspace{-0.3cm}
        \label{fig:algos_samson}
        \begin{tikzpicture}
            \begin{axis}[
                        width=0.9\linewidth,
                        height=5cm,
                        xmin = 0,
                        xmax = 500,
                        ylabel = error/$\|X\|_F^2$,
                        xlabel = {Iteration},
                        cycle list name=exotic,
                        mark size = 2pt,
                        mark repeat = 20,
                        legend cell align={left},
                        legend style={font=\footnotesize,at={(0,0)},anchor=south west,draw=none,fill opacity=0.7,text opacity=1}]
            \addplot+[mark phase= 0,line width = 1.5pt] table[x expr=\coordindex+1, y index = 0]{maxvolnmf/figures/conv_samson.txt};
            \addplot+[mark phase= 4,line width = 1.5pt] table[x expr=\coordindex+1, y index = 1]{maxvolnmf/figures/conv_samson.txt};
            \addplot+[mark phase= 8,line width = 1.5pt] table[x expr=\coordindex+1, y index = 2]{maxvolnmf/figures/conv_samson.txt};
            \addplot+[mark phase=12,line width = 1.5pt] table[x expr=\coordindex+1, y index = 3]{maxvolnmf/figures/conv_samson.txt};
            \addplot+[mark phase=16,line width = 1.5pt] table[x expr=\coordindex+1, y index = 4]{maxvolnmf/figures/conv_samson.txt};
            \legend{Adgrad2,ADMM+Adgrad $\rho=0.01$,ADMM+Adgrad $\rho=0.1$,ADMM $\rho=0.01$,ADMM $\rho=0.1$}
            \end{axis}
        \end{tikzpicture}
    \end{subfigure}
    \caption{Comparison of algorithms for MaxVol NMF on various datasets}
    \label{fig:algos}
\end{figure}

\section{Normalized MaxVol NMF (N-MaxVol NMF)}\label{sec:normmaxvolnmf}

In \Cref{sec:maxvolnmf}, we explained that a main drawback of MaxVol \eqref{eq:nonexactmaxvolmf} is its bias towards clusters of the same size when $\lambda$ increases.  
Here, we introduce a normalized variant of MaxVol NMF to alleviate this issue, by maximizing the volume of the row-wise normalized $H$, denoted $\widetilde{H}$:  
\begin{mini}|s|
    {\scriptstyle W,H,\widetilde{H}}
    {f(W,H):=\frac{1}{2}\| X - WH \|_F^2 -\lambda\logdet(\widetilde{H}\widetilde{H}^\top+\delta I)\label{eq:nonexactnormalizedmaxvolmf}}{}{}
    \addConstraint{W\geq0,H\geq0,\revise{\|H(k,:)\|_2>0 \text{ for all }k}, \widetilde{H}=S^{-1}H}
    \addConstraint{\text{where } S=\diag(\|H(1,:)\|_2,\dots,\|H(r,:)\|_2).}
\end{mini} 
Let us explain why this model will not be biased towards  even clusterings. 
As $\lambda$ increases, one can show that $\widetilde{H}\widetilde{H}^\top$ converges to the identity; see Appendix~\ref{app:optsolB}.  
In other words, increasing $\lambda$ acts in favor of mutually orthogonal rows of $H$. Unlike MaxVol NMF, the norm of the rows of $H$ can take any non-zero value since it is $\widetilde{H}\widetilde{H}^\top$ that converges to the identity and not $HH^\top$. In fact, N-MaxVol NMF can be viewed as a continuum between NMF and Orthogonal NMF (ONMF), where ONMF requires $HH^\top = I_r$ and is equivalent to a weighted variant of spherical k-means~\cite{pompili2014two}. 

It is possible to control the range of the volume criterion via $\delta$. For $\delta>0$, 
$$
\logdet(\widetilde{H}\widetilde{H}^\top+\delta I)\in\left[\log((r+\delta)\delta^{r-1}),r\log(1+\delta)\right], 
$$ 
where the minimum and maximum are \revise{respectively} reached when $\widetilde{H}\widetilde{H}^\top=1_r1_r^\top$ and $\widetilde{H}\widetilde{H}^\top=I$; see Appendix~\ref{app:optsolB}. 
The parameter $\delta$ hence controls how large the volume of $\widetilde{H}$ can be, while $\lambda$ balances the reconstruction error and the volume criterion. 
By increasing $\delta$, the range is reduced, as it can be seen on \Cref{fig:range_normvol}. With respect to $\lambda$ and the reconstruction error, it is then harder to give more importance to the volume term in the objective. In the context of HU, $\delta$ can be seen as a mixture tolerance parameter, while $\lambda$ is a noise level estimation parameter. 
\begin{figure}[htbp!]
    \centering
    \begin{tikzpicture}
        \begin{axis}[
                    width=\linewidth,
                    height=6cm,
                    ymax = 1e2,
                    grid=major,
                    ymode=log,
                    xmin = 0,
                    xmax = 1,
                    ylabel = {},
                    xlabel = {$\delta$},
                    cycle list name=exotic,
                    legend columns = 2,
                    legend cell align={left},
                    legend style={font=\footnotesize,at={(1,0.95)},anchor=north east,draw=none,fill opacity=0.5,text opacity=1}]
        \foreach \j in {1,...,8}{
            \pgfmathtruncatemacro{\myresult}{\j+2}%
            \addplot+[mark=none,line width = 1.5pt] table[x index=0, y index = \j]{maxvolnmf/figures/range_normvol.txt};
            \addlegendentryexpanded{$r=\myresult$};
        }
        \end{axis}
    \end{tikzpicture}
    \caption[Range of the logdet depending on $r$ and $\delta$]{Value of $r\log(1+\delta)-\log(1+r\delta^{-1})-r\log\delta$ depending on $\delta$ for various $r$'s.}
    \label{fig:range_normvol}
\end{figure}
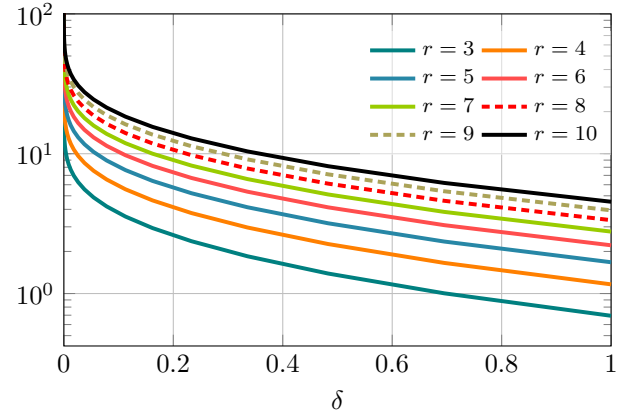

Another feature of N-MaxVol NMF is the removal of the simplex structure on $H$. 
Recall that the simplex structure is not without loss of generality. In HU for instance, if there are two pure pixels of the same material but one of them receives more light than the other, then a perfect unmixing would require a different endmember for each. In other words, the simplex structure might require a larger rank. 

In spite of the benefits the normalized variant brings, we ``lose'' two aspects of the vanilla MaxVol NMF. The most notable one is the identifiability. It remains unknown if N-MaxVol NMF is identifiable or not. We also lose the possibility to solve N-MaxVol NMF with the same ADMM formulation that we used for MaxVol NMF. We could not find a kernel that would provide us with a Bregman surrogate. Even if we did, the considered Bregman surrogate would then need to be nice enough to be easily solvable.  
We will therefore solve it with the adaptive accelerated gradient descent method.

\subsection{Solving N-MaxVol NMF}\label{sec:solvenormmaxvolnmf}

We solve N-MaxVol NMF \eqref{eq:nonexactnormalizedmaxvolmf} with the adaptive accelerated gradient descent method, introduced in \Cref{sec:adgrad}, 
and for which we only need to know the gradient which we provide in this section. 
The algorithm is exactly the same as \Cref{alg:Adgrad2}, except for the projected gradient step in \cref{alg:lineupdateH} that should be replaced by $H=[\Hextra-\gamma_{H}\nabla f(\Hextra)]_+$, where $f$ is the objective function, because there is no simplex structure in the normalized variant. It remains to compute the gradient of $f$ in \eqref{eq:nonexactnormalizedmaxvolmf} relatively to $H$. 
Knowing that 
\begin{align*}
    \frac{\partial \widetilde{H}(k,:)}{\partial H(k,j)}&=\begin{pmatrix}
        -\frac{H(k,1)H(k,j)}{\|H(k,:)\|^3} \\ \vdots \\ \frac{\|H(k,:)\|^2-H(k,j)^2}{\|H(k,:)\|^3} \\ \vdots \\ -\frac{H(k,n)H(k,j)}{\|H(k,:)\|^3}\end{pmatrix}^\top\\
    &=\frac{1}{\|H(k,:)\|^3}\left(\|H(k,:)\|^2 e_j^\top - H(k,j)H(k,:)\right),
\end{align*}
and using the chain rule, we have that

\begin{align*}
    \MoveEqLeft[1.5]\frac{\partial \logdet(\widetilde{H}\widetilde{H}^\top{+}\delta I)}{\partial H(k,j)} = \left\langle\frac{\partial \logdet(\widetilde{H}\widetilde{H}^\top{+}\delta I)}{\partial \widetilde{H}},\frac{\partial \widetilde{H}}{\partial H(k,j)}\right\rangle \\
    = & \left\langle 2(\widetilde{H}\widetilde{H}^\top{+}\delta I)^{-1}\widetilde{H},\frac{E_{k,j}}{\|H(k,:)\|} {-} \frac{H(k,j)}{\|H(k,:)\|^3}e_k H(k,:) \right\rangle \\
    = &\begin{multlined}[t]
        \frac{1}{\|H(k,:)\|}\langle 2(\widetilde{H}\widetilde{H}^\top+\delta I)^{-1}\widetilde{H},E_{k,j} \rangle \\ - \frac{1}{\|H(k,:)\|}\langle 2(\widetilde{H}\widetilde{H}^\top+\delta I)^{-1}, e_k \widetilde{H}(k,:)\widetilde{H}^\top \rangle \widetilde{H}(k,j).
    \end{multlined}
\end{align*}
In the end, 
\begin{equation*}
    \frac{\partial \logdet(\widetilde{H}\widetilde{H}^\top {+}\delta I)}{\partial H} \\= 2S^{-1}\left[(\widetilde{H}\widetilde{H}^\top {+}\delta I)^{-1}{-} \widetilde{D}_\delta\right]\widetilde{H}
\end{equation*}
and
\begin{equation*}
    \frac{\partial f}{\partial H}=W^\top(WH{-}X) {-} 2\lambda S^{-1}\left[(\widetilde{H}\widetilde{H}^\top {+}\delta I)^{-1}{-}\widetilde{D}_\delta\right]\widetilde{H},
\end{equation*}
where $\widetilde{D}_\delta = \diag\left((\widetilde{H}\widetilde{H}^\top +\delta I)^{-1}\widetilde{H}\widetilde{H}^\top\right)$.

\section{Numerical Experiments}\label{sec:normmaxvolnmfexp}


\pgfplotsset{
  boxplot legend/.style args={#1,#2}{ 
    legend image code/.code={
      \draw[#1,line width=0.7pt] 
        (0cm,-0.1cm) rectangle (0.6cm,0.1cm)
        (0.6cm,0cm) -- (0.7cm,0cm)
        (0cm,0cm) -- (-0.1cm,0cm)
        (0.7cm,0.1cm) -- (0.7cm,-0.1cm)
        (-0.1cm,0.1cm) -- (-0.1cm,-0.1cm)
        (0.3cm,-0.1cm) -- (0.3cm,0.1cm);
      \path[#1,mark options={#1,rotate=-90},mark=#2]
        plot coordinates {(0.9cm,0cm)};
    },
  },
}

In this section, we compare MinVol NMF, MaxVol NMF and N-MaxVol NMF on synthetic and hyperspectral datasets. The hyperparameter $\lambda$ is again tuned using  \cite{nguyen2024towards}, as specified in \Cref{remark:samsonmaxvollambdas}. All experiments can be run using the code available from {\color{blue}\url{https://gitlab.com/vuthanho/maxvolmf.jl}}. 

\subsection{Synthetic datasets}

The synthetic datasets are generated by mixing five endmembers from the USGS library. The ground truth $W$ contains the spectral signature of these endmembers, displayed on \Cref{fig:synth_endmembers}. We consider the two cases where $W$ is well conditioned, leaving the endmembers as they are in \Cref{fig:synth_endmembers}, and where $W$ is ill conditioned, scaling its columns linearly from $0.05$ to $1$. The columns of the ground truth abundance matrix $H$ are drawn following a Dirichlet distribution with parameter $\alpha=0.1$, and clipping the values below $0.05$ to zero. We consider the two cases where $H$ is column-wise stochastic and where it is not, scaling its columns randomly, where the scaling follows a normal distribution $\mathcal{N}(1,0.2^2)$.
This simulates the presence of different lighting conditions. The data is generated as $X=WH+N$ where $N$ is Gaussian noise. We consider three different signal-to-noise ratios (SNR): 30dB, 20dB and 10dB. For each of the four configurations, we generate 20 noiseless different datasets $WH$ and add the noise $N$ afterward. The estimated $W$ is compared to the ground truth $W$ by computing the maximum angle between the matched ground truth and estimated endmembers. In other words, among the endmembers, how accurate is the least accurately estimated endmember. The results are displayed as box plots on \Cref{fig:synth_max_angles}. As expected, N-MaxVol NMF outperforms MinVol NMF when $H$ is not stochastic, while MaxVol performs best when $H$ is stochastic followed by N-MaxVol NMF. 
This confirms our theoretical findings that MaxVol NMF is more suitable than MinVol NMF as it is not biased towards rank-deficient solutions while it allows to recover sparse $H$. 


\begin{figure}[htbp!]
    \centering
    \begin{tikzpicture}
        \begin{axis}[
                    width=\linewidth,
                    height=6cm,
                    ymin = 0, ymax = 1,
                    grid=major,
                    xmin = 1, xmax = 224,
                    ylabel = {},
                    cycle list name=exotic,
                    legend columns = 2,
                    legend cell align={left},
                    legend style={font=\footnotesize,at={(1,0.95)},anchor=north east,draw=none,fill opacity=0.5,text opacity=1}]
        \foreach \j in {0,...,4}{
            \addplot+[mark=none,line width = 1.5pt] table [x expr=\coordindex+1, y index=\j] {maxvolnmf/figures/DC2gt.txt};
        }
        \end{axis}
    \end{tikzpicture}
    \caption{Spectral signature of five endmembers from  the USGS library used to generate the synthetic datasets.} 
    \label{fig:synth_endmembers}
\end{figure}
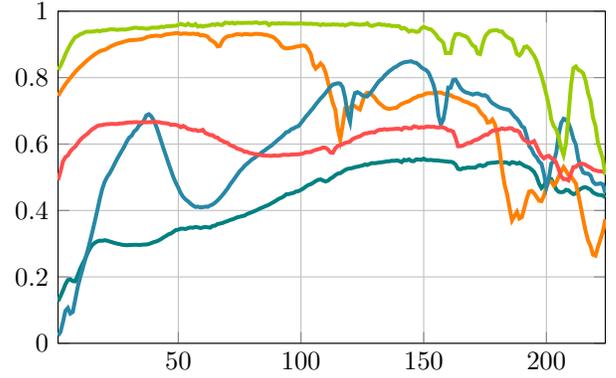

\newcount\nboxplots
\nboxplots=3 

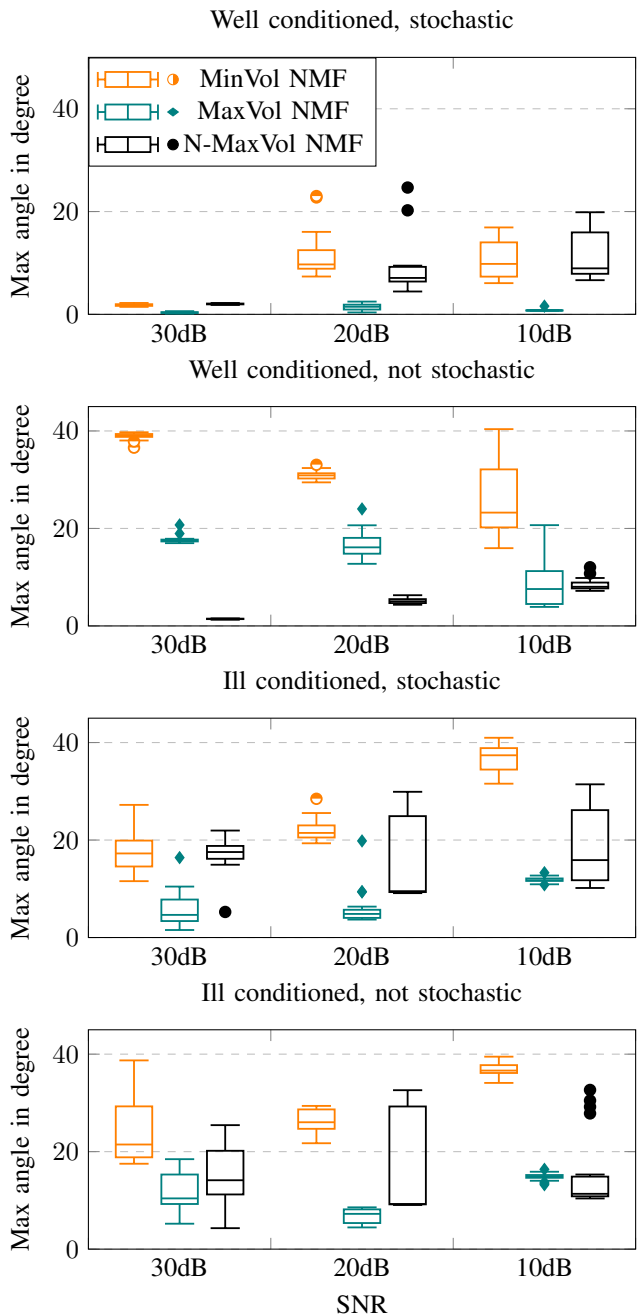
\begin{figure}[htbp!]
    \centering
    \begin{tikzpicture}
    \begin{axis}[
        title = {Well conditioned, stochastic},
        width=\linewidth,
        height=5cm,
        xticklabels={30dB,20dB,10dB},
        xmin=0,xmax=3,
        x tick label as interval,
        xtick={0,1,2,3},
        ylabel={Max angle in degree},
        ymin=0,ymax=50,
        ymajorgrids=true,
        grid style=dashed,
        boxplot={
            draw position={
                (1/(\the\nboxplots+1))
                + floor(\plotnumofactualtype/\the\nboxplots + 1e-9)
                + (1/(\the\nboxplots+1))*mod(\plotnumofactualtype,\the\nboxplots)
            },
            box extend=1/(\the\nboxplots+2)
        },
        legend style={
            at={(0,1)},anchor=north west,},]
    \addlegendimage{boxplot legend={orange,halfcircle*}}
    \addlegendimage{boxplot legend={teal,diamond*}}
    \addlegendimage{boxplot legend={black,*}}
    \foreach \col in {0,...,2}{
    \addplot[boxplot/draw direction=y,orange,mark=halfcircle*,line width=0.7pt] table[y index = \col]{maxvolnmf/figures/maxanglesminvol_perf.txt};
    \addplot[boxplot/draw direction=y,teal,mark=diamond*,line width=0.7pt] table[y index = \col]{maxvolnmf/figures/maxanglesvanilla_perf.txt};
    \addplot[boxplot/draw direction=y,black,line width=0.7pt] table[y index = \col]{maxvolnmf/figures/maxanglesmaxvol_perf.txt};
    };
    \legend{MinVol NMF, MaxVol NMF, N-MaxVol NMF}
    \end{axis}
    \end{tikzpicture}

    \begin{tikzpicture}
    \begin{axis}[
        title = {Well conditioned, not stochastic},
        width=\linewidth,
        height=4.5cm,
        xticklabels={30dB,20dB,10dB},
        xmin=0,xmax=3,
        x tick label as interval,
        xtick={0,1,2,3},
        ylabel={Max angle in degree},
        ymin=0,ymax=45,
        ymajorgrids=true,
        grid style=dashed,
        boxplot={
            draw position={
                (1/(\the\nboxplots+1))
                + floor(\plotnumofactualtype/\the\nboxplots + 1e-9)
                + (1/(\the\nboxplots+1))*mod(\plotnumofactualtype,\the\nboxplots)
            },
            box extend=1/(\the\nboxplots+2)
        },
        legend style={
            at={(0,1)},anchor=north west,},]
    \addlegendimage{boxplot legend={orange,halfcircle*}}
    \addlegendimage{boxplot legend={teal,diamond*}}
    \addlegendimage{boxplot legend={black,*}}
    \foreach \col in {0,...,2}{
    \addplot[boxplot/draw direction=y,orange,mark=halfcircle*,line width=0.7pt] table[y index = \col]{maxvolnmf/figures/maxanglesminvol_not_stoch.txt};
    \addplot[boxplot/draw direction=y,teal,mark=diamond*,line width=0.7pt] table[y index = \col]{maxvolnmf/figures/maxanglesvanilla_not_stoch.txt};
    \addplot[boxplot/draw direction=y,black,line width=0.7pt] table[y index = \col]{maxvolnmf/figures/maxanglesmaxvol_not_stoch.txt};
    };
    \end{axis}
    \end{tikzpicture}

    \begin{tikzpicture}
    \begin{axis}[
        title = {Ill conditioned, stochastic},
        width=\linewidth,
        height=4.5cm,
        xticklabels={30dB,20dB,10dB},
        xmin=0,xmax=3,
        x tick label as interval,
        xtick={0,1,2,3},
        ylabel={Max angle in degree},
        ymin=0,ymax=45,
        ymajorgrids=true,
        grid style=dashed,
        boxplot={
            draw position={
                (1/(\the\nboxplots+1))
                + floor(\plotnumofactualtype/\the\nboxplots + 1e-9)
                + (1/(\the\nboxplots+1))*mod(\plotnumofactualtype,\the\nboxplots)
            },
            box extend=1/(\the\nboxplots+2)
        },
        legend style={
            at={(0,1)},anchor=north west,},]
    \addlegendimage{boxplot legend={orange,halfcircle*}}
    \addlegendimage{boxplot legend={teal,diamond*}}
    \addlegendimage{boxplot legend={black,*}}
    \foreach \col in {0,...,2}{
    \addplot[boxplot/draw direction=y,orange,mark=halfcircle*,line width=0.7pt] table[y index = \col]{maxvolnmf/figures/maxanglesminvol_bad_cond.txt};
    \addplot[boxplot/draw direction=y,teal,mark=diamond*,line width=0.7pt] table[y index = \col]{maxvolnmf/figures/maxanglesvanilla_bad_cond.txt};
    \addplot[boxplot/draw direction=y,black,line width=0.7pt] table[y index = \col]{maxvolnmf/figures/maxanglesmaxvol_bad_cond.txt};
    };
    \end{axis}
    \end{tikzpicture}

    \begin{tikzpicture}
    \begin{axis}[
        title = {Ill conditioned, not stochastic},
        width=\linewidth,
        height=4.5cm,
        xticklabels={30dB,20dB,10dB},
        xmin=0,xmax=3,
        x tick label as interval,
        xtick={0,1,2,3},
        ylabel={Max angle in degree},xlabel={SNR},
        ymin=0,ymax=45,
        ymajorgrids=true,
        grid style=dashed,
        boxplot={
            draw position={
                (1/(\the\nboxplots+1))
                + floor(\plotnumofactualtype/\the\nboxplots + 1e-9)
                + (1/(\the\nboxplots+1))*mod(\plotnumofactualtype,\the\nboxplots)
            },
            box extend=1/(\the\nboxplots+2)
        },
        legend style={
            at={(0.0,1)},anchor=north west,},]
    \addlegendimage{boxplot legend={orange,halfcircle*}}
    \addlegendimage{boxplot legend={teal,diamond*}}
    \addlegendimage{boxplot legend={black,*}}
    \foreach \col in {0,...,2}{
    \addplot[boxplot/draw direction=y,orange,mark=halfcircle*,line width=0.7pt] table[y index = \col]{maxvolnmf/figures/maxanglesminvol_bad_cond_not_stoch.txt};
    \addplot[boxplot/draw direction=y,teal,mark=diamond*,line width=0.7pt] table[y index = \col]{maxvolnmf/figures/maxanglesvanilla_bad_cond_not_stoch.txt};
    \addplot[boxplot/draw direction=y,black,line width=0.7pt] table[y index = \col]{maxvolnmf/figures/maxanglesmaxvol_bad_cond_not_stoch.txt};
    };
    \end{axis}
    \end{tikzpicture}
    \caption{Box plots of the maximum angles between the ground truth and the estimated endmembers, depending on the model, the SNR and the configuration of $W$ and $H$.}
    \label{fig:synth_max_angles}
\end{figure}

\subsection{Hyperspectral images}

Let us now evaluate the performance of N-MaxVol NMF on famous hyperspectral datasets. Results can be compared with~\cite{zhu2017hyperspectral} where some ground-truths for a variety of known hyperspectral datasets are proposed. Even if these are called ground-truths, hyperspectral ground-truths do not exist except if the measurements are performed in a controlled environment. Consider the proposed abundance maps for Urban with four endmembers in~\cite{zhu2017hyperspectral}. Clearly, some trees are detected where in fact it should be a mixture of grass and soil. Some rooftops are also detected where it should be soil. Still, the author used as many a priori knowledge as possible to provide these abundance maps and endmembers that are probably close to reality. Our message here is that ground-truths for these hyperspectral datasets should be interpreted with caution. 
\revise{
\textbf{Choosing $\lambda$ and $\delta$ :} During our experiments, for N-MaxVol NMF, we almost never have to tweak the parameter $\delta$ and found that $\delta=0.5$ works well in most cases. The only exception is for the Jasper dataset with four endmembers, where the level of mixture required to increase $\delta$ to 1 in order to obtain satisfying results. The parameter $\lambda$ also requires minimal tuning effort as it is automatically tuned using the approach proposed in~\cite{nguyen2024towards}. In most cases a value of $\lambda$ between 0.5 and 1 works well.  
}

On Moffett and on Samson, our model clearly outperforms MinVol NMF and MaxVol NMF, see \Cref{fig:moffett_nmaxvol_1_05,fig:samson_nmaxvol}. Water, soil and tree are correctly separated and their spectral signatures are very close to the expected ones in \cite{zhu2017hyperspectral}. The visual results for MinVol and MaxVol NMF for all data sets are provided in the Supplementary Material.  

\begin{figure}[htbp!]
	\hfill\fbox{\includegraphics[width=0.93\linewidth]{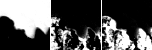}}

	\begin{tikzpicture}
        \begin{groupplot}[
            group style={
                group name=my plots,
                group size= 3 by 1,
                x descriptions at=edge bottom,
                y descriptions at=edge left,
                horizontal sep=0.0cm,
                vertical sep=0.cm,
            },
			scale only axis,width=1/3*0.93\linewidth,no markers,grid=major,xlabel={},ylabel={},tick label style={/pgf/number format/fixed,font=\tiny},ymin=0,ymax=0.3,ytick={0,0.1,...,0.5},
        ]
        \nextgroupplot[]
            \addplot [line width = 1pt,blue] table [x expr=\coordindex+1, y index=0] {maxvolnmf/figures/moffett_nmaxvol_1_05.txt};
        \nextgroupplot[]
            \addplot [line width = 1pt,red] table [x expr=\coordindex+1, y index=1] {maxvolnmf/figures/moffett_nmaxvol_1_05.txt};
        \nextgroupplot[]
            \addplot [line width = 1pt] table [x expr=\coordindex+1, y index=2] {maxvolnmf/figures/moffett_nmaxvol_1_05.txt};
        \end{groupplot}
	\end{tikzpicture}
    \caption[Abundance maps and endmembers by N-MaxVol NMF on Moffett]{Abundance maps and endmembers ({\color{blue}water}, {\color{red}tree} and soil) by N-MaxVol NMF on Moffett, with $\lambda=1$ and $\delta=0.5$.}
    \label{fig:moffett_nmaxvol_1_05}
\end{figure}

\begin{figure}[htbp!]
	\hfill\fbox{\includegraphics[width=0.93\linewidth]{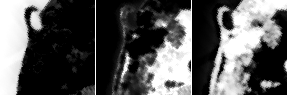}}

	\begin{tikzpicture}
        \begin{groupplot}[
            group style={
                group name=my plots,
                group size= 3 by 1,
                x descriptions at=edge bottom,
                y descriptions at=edge left,
                horizontal sep=0.0cm,
                vertical sep=0.cm,
            },
			scale only axis,width=1/3*0.93\linewidth,no markers,grid=major,xlabel={},ylabel={},tick label style={/pgf/number format/fixed,font=\tiny},ymin=0,ymax=0.5,ytick={0,0.1,...,0.5},
        ]
        \nextgroupplot
            \addplot [line width = 1pt,blue] table [x expr=\coordindex+1, y index=0] {maxvolnmf/figures/samson_nmaxvol.txt};
        \nextgroupplot
            \addplot [line width = 1pt] table [x expr=\coordindex+1, y index=1] {maxvolnmf/figures/samson_nmaxvol.txt};
        \nextgroupplot
            \addplot [line width = 1pt,red] table [x expr=\coordindex+1, y index=2] {maxvolnmf/figures/samson_nmaxvol.txt};
        \end{groupplot}
	\end{tikzpicture}
    \caption[Abundance maps and endmembers by N-MaxVol NMF on Samson]{Abundance maps and endmembers ({\color{blue}water}, soil and {\color{red}tree}) by N-MaxVol NMF on Samson, with $\lambda=1$ and $\delta=0.5$.}
    \label{fig:samson_nmaxvol}
\end{figure}


\subsubsection{Samson hyperspectral image}

Let us consider the Samson with $r=6$, although Samson contains mostly 3 endembers.  
With MinVol NMF, the 3 virtual endmembers are brought to zero by properly tuning $\lambda$ and $\delta$. 
On the other hand, using $r=6$ allows MaxVol NMF  to learn more spectral varieties; see \Cref{fig:samson_r6_smaxvol_05_05}. 
We see three different kinds of tree and two different kinds of soil. We can then add together the rows of $H$ that correspond to varieties of the same endmembers. The resulting merged abundance maps are shown on \Cref{fig:grouped_samson}. The abundance maps on \Cref{fig:grouped_samson} and \Cref{fig:samson_nmaxvol} are very close to each other. Actually, results with $r=6$ are more satisfying for the water unmixing. On \Cref{fig:samson_nmaxvol}, some small artifacts of false-positives can be seen, especially in the bottom right corner of the abundance map corresponding to water. These artifacts are not visible on \Cref{fig:grouped_samson}. 
This illustrates that MaxVol NMF allows us to increase the number of endmembers in order to improve the results in a controlled manner.

\subsubsection{Urban  hyperspectral image} 

The Urban datasets contains 162 spectral bands, and images of size 307 $\times$ 307. It is the aerial photo of a Walmart in Texas. 
 This dataset is particularly insightful because it is known for having meaningful unmixing results for $r=4,5\text{ and }6$~\cite{zhu2017hyperspectral}.
 Results are displayed on \Cref{fig:urban_nmaxvol}. With $r=4$, we have roof, grass, a combination of asphalt and soil, and tree. With $r=5$, the distinction is being made between asphalt and dirt. With $r=6$, the distinction is being made between grass and dry grass. Typical ground-truths with $r=6$ rather suggest a distinction between two kinds of roofs, instead of grass and dry grass. Here our model \revise{proposes} another interpretation for $r=6$ which  makes sense.

\subsubsection{Jasper hyperspectral image}

The last experiment is on the Jasper dataset, where ground-truths suggest four endmembers: tree, water, soil and road. Unmixing algorithms often struggle to correctly separate water and road on Jasper. 
\Cref{fig:jasper4} shows that N-MaxVol NMF achieves not ideal but nonetheless decent results. The issue with our model here is that in order to improve the distinction between water and road, $\lambda$ should be increased. However, 
there are many areas where tree and soil are heavily mixed, and increasing $\lambda$ will associate these pixels with single endmember. 
One way to circumvent this issue is to increase the rank. Results with $r=5$ are displayed on \Cref{fig:jasper5}. The additional endmember is in fact a combination of tree and soil. With this trick, water and road are properly identified without compromising the quality of the other endmembers.

\revise{Overall, across random initializations, N-MaxVol NMF is very consistent for the simpler datasets (Moffett, Samson, and Urban). More complicated datasets, such as Jasper, can provide slightly different results depending on the initialization and the choice of $\lambda$ and $\delta$.}

\begin{figure}[htbp!]
	\hfill\fbox{\includegraphics[width=0.93\linewidth]{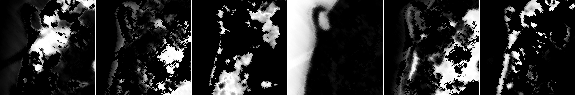}}

    \begin{tikzpicture}
        \begin{groupplot}[
            group style={
                group name=my plots,
                group size= 6 by 1,
                x descriptions at=edge bottom,
                y descriptions at=edge left,
                horizontal sep=0.0cm,
                vertical sep=0.cm,
            },
			scale only axis,width=1/6*0.93\linewidth,no markers,grid=major,xlabel={},ylabel={},tick label style={/pgf/number format/fixed,font=\tiny},ymin=0,ymax=0.2,ytick={0,0.1,...,0.5},
        ]
        \nextgroupplot
            \addplot [line width = 1pt] table [x expr=\coordindex+1, y index=0] {maxvolnmf/figures/samson_r6_smaxvol_05_05.txt};
        \nextgroupplot
            \addplot [line width = 1pt,red] table [x expr=\coordindex+1, y index=1] {maxvolnmf/figures/samson_r6_smaxvol_05_05.txt};
        \nextgroupplot
            \addplot [line width = 1pt] table [x expr=\coordindex+1, y index=2] {maxvolnmf/figures/samson_r6_smaxvol_05_05.txt};
        \nextgroupplot
            \addplot [line width = 1pt,blue] table [x expr=\coordindex+1, y index=3] {maxvolnmf/figures/samson_r6_smaxvol_05_05.txt};
        \nextgroupplot
            \addplot [line width = 1pt,red] table [x expr=\coordindex+1, y index=4] {maxvolnmf/figures/samson_r6_smaxvol_05_05.txt};
        \nextgroupplot
            \addplot [line width = 1pt] table [x expr=\coordindex+1, y index=5] {maxvolnmf/figures/samson_r6_smaxvol_05_05.txt};
        \end{groupplot}
	\end{tikzpicture}
	\caption[Abundance maps and endmembers by N-MaxVol NMF on Samson with $r=6$]{Abundance maps and endmembers (tree, {\color{red}soil} and {\color{blue}water}) by N-MaxVol NMF with $r=6$ on Samson, with $\lambda=0.5$ and $\delta=0.5$.}
    \label{fig:samson_r6_smaxvol_05_05}
\end{figure}

\begin{figure}[htbp!]
	\hfill\fbox{\includegraphics[width=0.93\linewidth]{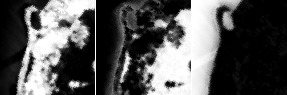}}

    \begin{tikzpicture}
        \begin{groupplot}[
            group style={
                group name=my plots,
                group size= 3 by 1,
                x descriptions at=edge bottom,
                y descriptions at=edge left,
                horizontal sep=0.0cm,
                vertical sep=0.cm,
            },
			scale only axis,width=1/3*0.93\linewidth,no markers,grid=major,xlabel={},ylabel={},tick label style={/pgf/number format/fixed,font=\tiny},ymin=0,ymax=0.2,ytick={0,0.1,...,0.5},
        ]
        \nextgroupplot
            \addplot [line width = 1pt] table [x expr=\coordindex+1, y index=0] {maxvolnmf/figures/samson_r6_smaxvol_05_05.txt};
            \addplot [line width = 1pt,dashed] table [x expr=\coordindex+1, y index=2] {maxvolnmf/figures/samson_r6_smaxvol_05_05.txt};
            \addplot [line width = 1pt,dotted] table [x expr=\coordindex+1, y index=5] {maxvolnmf/figures/samson_r6_smaxvol_05_05.txt};
        \nextgroupplot
            \addplot [line width = 1pt,red] table [x expr=\coordindex+1, y index=1] {maxvolnmf/figures/samson_r6_smaxvol_05_05.txt};
            \addplot [line width = 1pt,red,dashed] table [x expr=\coordindex+1, y index=4] {maxvolnmf/figures/samson_r6_smaxvol_05_05.txt};
        \nextgroupplot
            \addplot [line width = 1pt,blue] table [x expr=\coordindex+1, y index=3] {maxvolnmf/figures/samson_r6_smaxvol_05_05.txt};
        \end{groupplot}
	\end{tikzpicture}
	\caption[Abundance maps grouped by endmember varieties and endmembers by N-MaxVol NMF with $r=6$ on Samson]{Abundance maps grouped by endmember varieties and endmembers (tree, {\color{red}soil} and {\color{blue}water}) by N-MaxVol NMF with $r=6$ on Samson, with $\lambda=0.5$ and $\delta=0.5$.}
    \label{fig:grouped_samson}
\end{figure}

\begin{figure}[htbp!]
    \begin{subfigure}{\linewidth}
        \caption{$r=4$}
        \hfill\fbox{\includegraphics[width=0.93\linewidth]{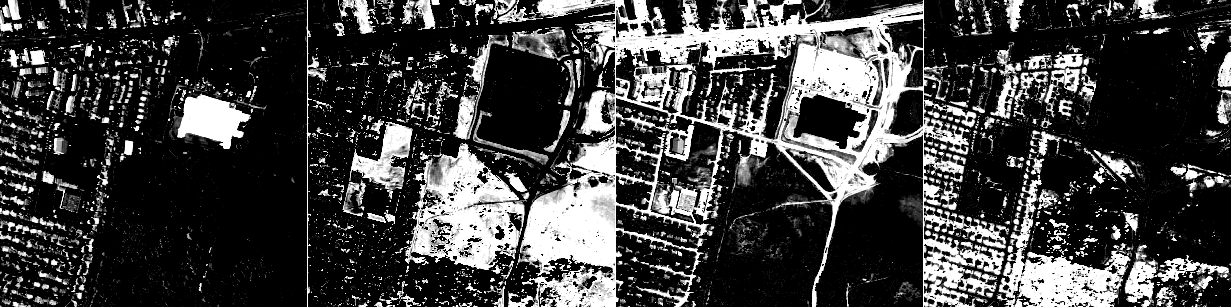}}
    
        \begin{tikzpicture}
            \begin{groupplot}[
                group style={
                    group name=my plots,
                    group size= 4 by 1,
                    x descriptions at=edge bottom,
                    y descriptions at=edge left,
                    horizontal sep=0.0cm,
                    vertical sep=0.cm,
                },
                scale only axis,width=1/4*0.93\linewidth,no markers,grid=major,xlabel={},ylabel={},tick label style={/pgf/number format/fixed,font=\tiny},ymin=0,ymax=0.2,ytick={0,0.1,...,0.5},
            ]
            \nextgroupplot
                \addplot [line width = 1pt] table [x expr=\coordindex+1, y index=0] {maxvolnmf/figures/urban4_smaxvol_05_05.txt};
            \nextgroupplot
                \addplot [line width = 1pt,teal] table [x expr=\coordindex+1, y index=1] {maxvolnmf/figures/urban4_smaxvol_05_05.txt};
            \nextgroupplot
                \addplot [line width = 1pt,red] table [x expr=\coordindex+1, y index=2] {maxvolnmf/figures/urban4_smaxvol_05_05.txt};
            \nextgroupplot
                \addplot [line width = 1pt,blue] table [x expr=\coordindex+1, y index=3] {maxvolnmf/figures/urban4_smaxvol_05_05.txt};
            \end{groupplot}
        \end{tikzpicture}
    \end{subfigure}

    \begin{subfigure}{\linewidth}
        \caption{$r=5$}
        \hfill\fbox{\includegraphics[width=0.93\linewidth]{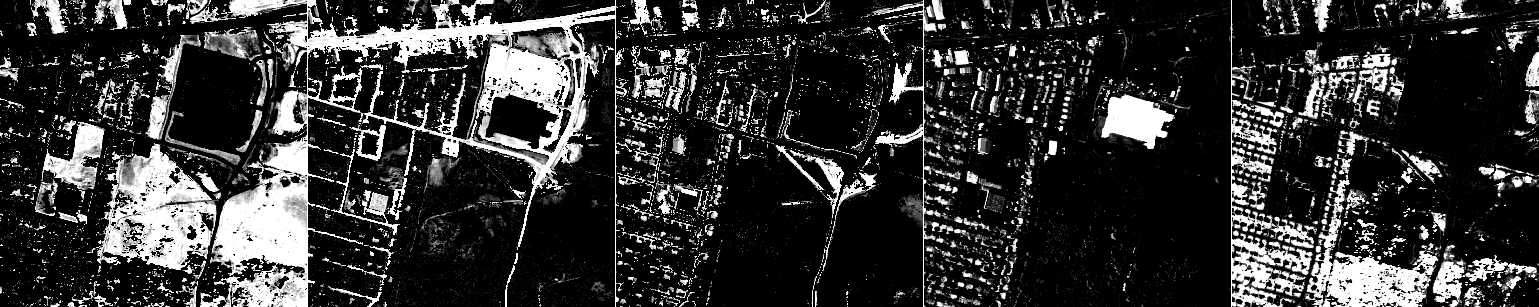}}
    
        \begin{tikzpicture}
            \begin{groupplot}[
                group style={
                    group name=my plots,
                    group size= 5 by 1,
                    x descriptions at=edge bottom,
                    y descriptions at=edge left,
                    horizontal sep=0.0cm,
                    vertical sep=0.cm,
                },
                scale only axis,width=1/5*0.93\linewidth,no markers,grid=major,xlabel={},ylabel={},tick label style={/pgf/number format/fixed,font=\tiny},ymin=0,ymax=0.2,ytick={0,0.1,...,0.5},
            ]
            \nextgroupplot
                \addplot [line width = 1pt,teal] table [x expr=\coordindex+1, y index=0] {maxvolnmf/figures/urban5_smaxvol_05_05.txt};
            \nextgroupplot
                \addplot [line width = 1pt,red] table [x expr=\coordindex+1, y index=1] {maxvolnmf/figures/urban5_smaxvol_05_05.txt};
            \nextgroupplot
                \addplot [line width = 1pt,orange] table [x expr=\coordindex+1, y index=2] {maxvolnmf/figures/urban5_smaxvol_05_05.txt};
            \nextgroupplot
                \addplot [line width = 1pt] table [x expr=\coordindex+1, y index=3] {maxvolnmf/figures/urban5_smaxvol_05_05.txt};
            \nextgroupplot
                \addplot [line width = 1pt,blue] table [x expr=\coordindex+1, y index=4] {maxvolnmf/figures/urban5_smaxvol_05_05.txt};
            \end{groupplot}
        \end{tikzpicture}
    \end{subfigure}

    \begin{subfigure}{\linewidth}
        \caption{$r=6$}
        \hfill\fbox{\includegraphics[width=0.93\linewidth]{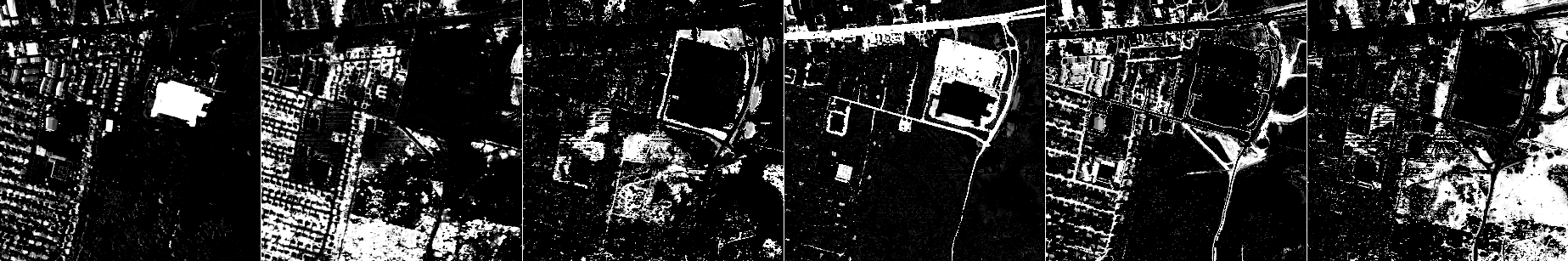}}
    
        \begin{tikzpicture}
            \begin{groupplot}[
                group style={
                    group name=my plots,
                    group size= 6 by 1,
                    x descriptions at=edge bottom,
                    y descriptions at=edge left,
                    horizontal sep=0.0cm,
                    vertical sep=0.cm,
                },
                scale only axis,width=1/6*0.93\linewidth,no markers,grid=major,xlabel={},ylabel={},tick label style={/pgf/number format/fixed,font=\tiny},ymin=0,ymax=0.2,ytick={0,0.1,...,0.5},
            ]
            \nextgroupplot
                \addplot [line width = 1pt] table [x expr=\coordindex+1, y index=0] {maxvolnmf/figures/urban6_smaxvol_05_05.txt};
            \nextgroupplot
                \addplot [line width = 1pt,blue] table [x expr=\coordindex+1, y index=1] {maxvolnmf/figures/urban6_smaxvol_05_05.txt};
            \nextgroupplot
                \addplot [line width = 1pt,olive] table [x expr=\coordindex+1, y index=2] {maxvolnmf/figures/urban6_smaxvol_05_05.txt};
            \nextgroupplot
                \addplot [line width = 1pt,red] table [x expr=\coordindex+1, y index=3] {maxvolnmf/figures/urban6_smaxvol_05_05.txt};
            \nextgroupplot
                \addplot [line width = 1pt,orange] table [x expr=\coordindex+1, y index=4] {maxvolnmf/figures/urban6_smaxvol_05_05.txt};
            \nextgroupplot
                \addplot [line width = 1pt,teal] table [x expr=\coordindex+1, y index=5] {maxvolnmf/figures/urban6_smaxvol_05_05.txt};
            \end{groupplot}
        \end{tikzpicture}
    \end{subfigure}
	\caption[Abundance maps and endmembers by N-MaxVol NMF on Urban depending on $r$]{Abundance maps and endmembers (roof, {\color{blue}tree}, {\color{olive}dry grass}, {\color{red}asphalt}, {\color{orange}soil}, {\color{teal}grass}) by N-MaxVol NMF on Urban, with $\lambda=0.5$ and $\delta=0.5$, depending on $r$.}
    \label{fig:urban_nmaxvol}
\end{figure}



\begin{figure}[htbp!]
	\hfill\fbox{\includegraphics[width=0.93\linewidth]{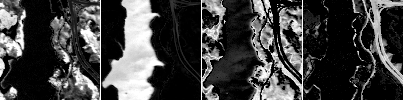}}

    \begin{tikzpicture}
        \begin{groupplot}[
            group style={
                group name=my plots,
                group size= 4 by 1,
                x descriptions at=edge bottom,
                y descriptions at=edge left,
                horizontal sep=0.0cm,
                vertical sep=0.cm,
            },
			scale only axis,width=1/4*0.93\linewidth,no markers,grid=major,xlabel={},ylabel={},tick label style={/pgf/number format/fixed,font=\tiny},ymin=0,ymax=0.3,ytick={0,0.1,...,0.5},
        ]
        \nextgroupplot
            \addplot [line width = 1pt] table [x expr=\coordindex+1, y index=0] {maxvolnmf/figures/jasper4.txt};
        \nextgroupplot
            \addplot [line width = 1pt,blue] table [x expr=\coordindex+1, y index=1] {maxvolnmf/figures/jasper4.txt};
        \nextgroupplot
            \addplot [line width = 1pt,red] table [x expr=\coordindex+1, y index=2] {maxvolnmf/figures/jasper4.txt};
        \nextgroupplot
            \addplot [line width = 1pt,teal] table [x expr=\coordindex+1, y index=3] {maxvolnmf/figures/jasper4.txt};
        \end{groupplot}
	\end{tikzpicture}
	\caption[Abundance maps and endmembers by N-MaxVol NMF with $r=4$ on Jasper]{Abundance maps and endmembers (tree, {\color{blue}water}, {\color{red}soil}, {\color{teal}road}) by N-MaxVol NMF with $r=4$ on Jasper, with $\lambda=2$ and $\delta=1$.}
    \label{fig:jasper4}
\end{figure}

\begin{figure}[htbp!]
	\hfill\fbox{\includegraphics[width=0.93\linewidth]{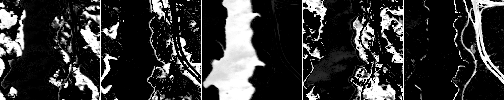}}

    \begin{tikzpicture}
        \begin{groupplot}[
            group style={
                group name=my plots,
                group size= 5 by 1,
                x descriptions at=edge bottom,
                y descriptions at=edge left,
                horizontal sep=0.0cm,
                vertical sep=0.cm,
            },
			scale only axis,width=1/5*0.93\linewidth,no markers,grid=major,xlabel={},ylabel={},tick label style={/pgf/number format/fixed,font=\tiny},ymin=0,ymax=0.3,ytick={0,0.1,...,0.5},
        ]
        \nextgroupplot
            \addplot [line width = 1pt] table [x expr=\coordindex+1, y index=0] {maxvolnmf/figures/jasper5.txt};
        \nextgroupplot
            \addplot [line width = 1pt,red] table [x expr=\coordindex+1, y index=1] {maxvolnmf/figures/jasper5.txt};
        \nextgroupplot
            \addplot [line width = 1pt,blue] table [x expr=\coordindex+1, y index=2] {maxvolnmf/figures/jasper5.txt};
        \nextgroupplot
            \addplot [line width = 1pt,orange] table [x expr=\coordindex+1, y index=3] {maxvolnmf/figures/jasper5.txt};
        \nextgroupplot
            \addplot [line width = 1pt,teal] table [x expr=\coordindex+1, y index=4] {maxvolnmf/figures/jasper5.txt};
        \end{groupplot}
	\end{tikzpicture}
	\caption[Abundance maps and endmembers by N-MaxVol NMF with $r=5$ on Jasper]{Abundance maps and endmembers (tree, {\color{red}soil}, {\color{blue}water}, {\color{orange}tree+soil}, {\color{teal}road}) by N-MaxVol NMF with $r=5$ on Jasper, with $\lambda=0.5$ and $\delta=0.5$.}
    \label{fig:jasper5}
\end{figure}


\section{Conclusion}\label{sec:conclusion}

In this paper, we first explored MaxVol NMF, a dual version of MinVol NMF, where the volume of $H$ is maximized instead of the volume of $W$ being minimized. 
In the noiseless case, MaxVol NMF is identifiable under the same conditions as MinVol NMF. 
In noisy settings, MaxVol NMF alleviates two important issues of MinVol NMF: the generation of rank-deficient solutions, and the difficulty to control the sparsity of $H$. 
We proposed two different algorithms to solve MaxVol NMF.  A drawback of MaxVol NMF is that it tends to generate clusters of the same size. This motivated us to introduce normalized MaxVol NMF (N-MaxVol NMF), a variant where the volume of the row-wise normalized $H$ factor is maximized. This model creates a continuum between NMF and ONMF, and leads to better separations than MinVol NMF on synthetic and hyperspectral unmixing. 

Further research directions include the study of the  identifiability of N-MaxVol NMF, and the use of N-MaxVol NMF on other types of data, e.g., document data sets or audio signals.  

\revise{
\section*{Acknowledgments}
We are grateful to the anonymous reviewers who carefully read the manuscript and provided feedback that helped us improve our paper.
}

\appendices

\revise{
\section{Beware of the sum-to-one constraint on the rows of $H$} \label{app:SSCrownorm} 

In this paper, we do not consider the sum-to-one constraint on the rows of $H$. Let us explain why. 
For an NMF, $X = WH$, sum-to-one constraint on the rows of $H$ can be made without loss of generality by the scaling degree of freedom with the columns of $W$, as opposed to the sum-to-one constraint on the columns of $H$.  
Moreover, MinVol NMF with this normalization is also identifiable: For $X = W'H'$ with $W'$ full rank, $H'1_n = 1_r$ and $H'$ satisfying the SSC, any optimal solution of  
\begin{equation} \label{eq:NMFrowsumtoone}
\min_{W \geq 0, H \geq 0} \det(W^\top W) \text{ such that } H 1_n = 1_r  
\end{equation} 
recovers $(W',H')$ up to permutation~\cite{fu2018identifiability}. 

Where is the issue with this normalization? 
The issue is that \emph{imposing the sum-to-one constraint on the rows of $H$ in an NMF can  destroy the SSC}. In other terms, row-scaling of a matrix might affect the SSC. 
In mathematical terms, let $X = WH$ be an NMF, and let $H_n$ be the scaled version of $H$ with sum-to-one rows 
and $W_n$ be scaled accordingly, so that $X = W_n H_n$. 
There is no guarantee that $H_n$ will satisfy the SSC when $H$ does, and vice versa.  
To the best of our knowledge, this subtle issue has never been highlighted in the literature, although it plays a crucial role in practice in the identifiability of MinVol NMF. (Note that scaling the columns of $H$ does not affect the SSC, since the SSC relies on properties of  $\cone(H)$ which is invariant to column scaling of $H$.) 

Let us illustrate this issue with an example and consider 
\begin{equation} \label{eq:HsscExample}
H = \left( \begin{array}{ccccc}
    1 & 0 & 0.4 & 0   & 0.1 \\
    0 & 1 & 0   & 0.4 & 0.1 \\ 
    0 & 0 & 0.6 & 0.6 & 0.8  
\end{array}
\right)
\end{equation}
which satisfies the SSC; this can be checked with the code from~\cite{gillis2024checking}, see also  Figure~\ref{fig:HsscExample} for an illustration.  

\begin{figure}[ht!]
    \centering 
    \tikzsetnextfilename{SSC-H}
\begin{tikzpicture}[scale=1]

\coordinate (e1) at (0,0);
\coordinate (e2) at (4,0);
\coordinate (e3) at (2,{2*sqrt(3)});

\coordinate (p1) at (1.2,{1.2*sqrt(3)});
\coordinate (p2) at (2.8,{1.2*sqrt(3)});

\coordinate (q) at (2,{1.6*sqrt(3)});

\fill[red!25]
    (e1) -- (e2) -- (p2) -- (q) -- (p1) -- cycle;

\draw[blue, thick, dashed] (e1) -- (e2) -- (e3) -- cycle;

\draw[red, thick]
    (e1) -- (e2) -- (p2) -- (q) -- (p1) -- cycle;

\node[blue, below left]  at (e1) {$e_1=(1,0,0)$};
\node[blue, below right] at (e2) {$e_2=(0,1,0)$};
\node[blue, above]       at (e3) {$e_3=(0,0,1)$};

\draw[green!60!black, thick, dotted]
    (2,{2*sqrt(3)/3}) circle ({2*sqrt(3)/3});

\fill[red] (e1) circle (2.5pt);
\fill[red] (e2) circle (2.5pt);
\fill[red] (p1) circle (2.5pt);
\fill[red] (p2) circle (2.5pt);
\fill[red] (q)  circle (2.5pt);


\node[red] at (2,-0.5) {$\operatorname{cone}(H) \cap \Delta^3$};
\node[green!60!black] at (2,{2*sqrt(3)/3}) {$\mathcal{C} \cap \Delta^3$};
\node[blue] at (3,3) {$\Delta^3$};

\end{tikzpicture}
    \caption{Illustration that $H$ from~\eqref{eq:HsscExample} satisfies the SSC. \label{fig:HsscExample} }
\end{figure}

The sum-to-one row-normalized version of $H$ is 
\begin{equation} \label{eq:HsscExample2}
H_n = \left( \begin{array}{ccccc}
2/3 & 0 & 4/15 & 0 & 1/15 \\
0 & 2/3 & 0 & 4/15 & 1/15 \\
0 & 0 & 3/10 & 3/10 & 2/5 
  \end{array}
\right), 
\end{equation}     
which does not satisfy the SSC; see Figure~\ref{fig:HsscExample2}. The reason is that the third row of $H$ has a larger sum and row normalization downweights the third endmember (that is, the third column of~$W$). 
\begin{figure}[ht!] 
\tikzsetnextfilename{SSC-Hn}
\begin{tikzpicture}[scale=1]

\coordinate (e1) at (0,0);
\coordinate (e2) at (4,0);
\coordinate (e3) at (2,{2*sqrt(3)});

\coordinate (p1) at ({18/17},{18*sqrt(3)/17});
\coordinate (p2) at ({50/17},{18*sqrt(3)/17});

\coordinate (q) at (2,{3*sqrt(3)/2});

\fill[red!25]
    (e1) -- (e2) -- (p2) -- (q) -- (p1) -- cycle;

\draw[blue, thick, dashed] (e1) -- (e2) -- (e3) -- cycle;

\draw[red, thick]
    (e1) -- (e2) -- (p2) -- (q) -- (p1) -- cycle;

\node[blue, below left]  at (e1) {$e_1=(1,0,0)$};
\node[blue, below right] at (e2) {$e_2=(0,1,0)$};
\node[blue, above]       at (e3) {$e_3=(0,0,1)$};

\draw[green!60!black, thick, dotted]
    (2,{2*sqrt(3)/3}) circle ({2*sqrt(3)/3});

\fill[red] (e1) circle (2.5pt);
\fill[red] (e2) circle (2.5pt);
\fill[red] (p1) circle (2.5pt);
\fill[red] (p2) circle (2.5pt);
\fill[red] (q)  circle (2.5pt);

\node[red] at (2,-0.5) {$\operatorname{cone}(H_n) \cap \Delta^3$};
\node[green!60!black] at (2,{2*sqrt(3)/3}) {$\mathcal{C} \cap \Delta^3$};
\node[blue] at (3,3.2) {$\Delta^3$};

\end{tikzpicture}

    \caption{Illustration that $H_n$ from~\eqref{eq:HsscExample2} does not satisfy the SSC because $\cone(H_n)$ does not contain $\mathcal C$. \label{fig:HsscExample2} }
\end{figure}

Even worse, let us construct the matrix $H_d$ by taking the matrix $H$ in \eqref{eq:HsscExample} but duplicating the last column 10 times and then scaling the rows to sum to one, which gives 
\[
H_d = \left( 
\begin{array}{ccccccc}
5/12   &        0       &       1/6       &     0        &      1/24       &    \dots     &      1/24 \\ 0      &        5/12       &    0   &           1/6    &        1/24     &      \dots       &   1/24\\  0  &            0          &    3/46   &        3/46     &      2/23     &      \dots       &    2/23 
\end{array}
\right). 
\] 
Because the third row of $H$ with the duplicated column has an even larger sum (namely 9.2, as opposed 2.4 for the first two rows), the third endmember is even more downweighted and the corresponding scaled matrix $H_d$ is very far from satisfying the SSC; see Figure~\ref{fig:HsscExample3} for an illustration.

\begin{figure}[ht!]

\tikzsetnextfilename{SSC-Hd}
\begin{tikzpicture}[scale=1]

\coordinate (e1) at (0,0);
\coordinate (e2) at (4,0);
\coordinate (e3) at (2,{2*sqrt(3)});

\coordinate (p1) at ({9/16},{9*sqrt(3)/16});
\coordinate (p2) at ({55/16},{9*sqrt(3)/16});

\coordinate (q) at (2,{48*sqrt(3)/47});

\coordinate (b1) at (39/31,{39/31*sqrt(3)});
\coordinate (b2) at (5.25,0);

\fill[red!25]
    (e1) -- (e2) -- (p2) -- (q) -- (p1) -- cycle;

\draw[blue, thick, dashed]
    (e1) -- (e2) -- (e3) -- cycle;

\draw[red, thick]
    (e1) -- (e2) -- (p2) -- (q) -- (p1) -- cycle;

 \draw[gray, very thick]
    (e1) -- (b1) -- (b2) -- cycle;

\node[blue, below left]  at (e1) {$e_1=(1,0,0)$};
\node[blue, below right] at (e2) {$e_2=(0,1,0)$};
\node[blue, above]       at (e3) {$e_3=(0,0,1)$};

\draw[green!60!black, thick, dotted]
    (2,{2*sqrt(3)/3}) circle ({2*sqrt(3)/3});

\fill[red] (e1) circle (2.5pt);
\fill[red] (e2) circle (2.5pt);
\fill[red] (p1) circle (2.5pt);
\fill[red] (p2) circle (2.5pt);
\fill[red] (q)  circle (2.5pt);

\node[red] at (2,-0.5)
    {$\operatorname{cone}(H_d) \cap \Delta^3$};

\node[green!60!black] at (2,{2*sqrt(3)/3})
    {$\mathcal{C} \cap \Delta^3$};

\node[blue] at (3,3) {$\Delta^3$};

\end{tikzpicture}

    \caption{Illustration that $H_d$, the scaled version with sum-to-one rows  of $H$ in \eqref{eq:HsscExample}  where the last column is duplicated 10 times, is very far from satisfying the SSC. The gray triangle contains the columns of $H_d$ and has smaller area than the blue triangle. 
    \label{fig:HsscExample3}}
\end{figure}

Moreover, in this case, a triangle with smaller area contains\footnote{More precisely, the cone generated by a triangle of smaller aperture contains $\cone(H_d)$.} the columns of $H_d$, as shown on Figure~\ref{fig:HsscExample3}. This allows us to construct an NMF problem where MinVol NMF with the sum-to-one constraint on the rows fails: Take $H_d$ as above, 
\[ 
W_d = \left( \begin{array}{ccc}
    1 & 1 & 0     \\
    0 & 1 & 0   \\ 
    0 & 0 & 1   
\end{array}
\right), 
\] 
and let $X = W_d H_d$. After some calculations, the solution corresponding to the triangle with smaller area is 
\[ 
W_d' = \left( \begin{array}{ccc}
25/39   &   10/13   &    23/39    \\ 
  0   & 1   &    0       \\ 
  0   &    0     &      1 
\end{array}
\right), 
\] 
whose corresponding $H_d'$ such that $X = W_d' H_d'$ and \mbox{$H_d' 1_n = 1_r$} is given by 
\[
H_d' = \left( 
\begin{array}{ccccccc}
13/20 & 3/20 & 1/5 & 0 & 0 & \dots & 0 \\
0 & 5/12 & 0 & 1/6 & 1/24 & \dots & 1/24 \\
0 & 0 & 3/46 & 3/46 & 2/23 & \dots & 2/23
\end{array}
\right), 
\] 
where $\det(W_d'^\top W_d') =  0.4109 < \det(W_d^\top W_d) = 1$. 
Therefore, for the matrix $X = W_d H_d$, MinVol NMF under the sum-to-one constraint on the rows~\eqref{eq:NMFrowsumtoone} fails to recover $(W_d,H_d)$ although $H_d$ satisfies the SSC, and hence MinVol NMF under the sum-to-one constraint on the columns works. 
Note that $H_d'$ is also far from satisfying the SSC, see Figure~\ref{fig:HsscExample4}, which prevents us to guarantee identifiability of the NMF $X = W_d' H_d'$ when solving~\eqref{eq:NMFrowsumtoone}. 
\begin{figure}
\tikzsetnextfilename{SSC-Hdp}
\begin{tikzpicture}[scale=1]

\coordinate (e1) at (0,0);
\coordinate (e2) at (4,0);
\coordinate (e3) at (2,{2*sqrt(3)});

%
%

\coordinate (h1) at (0,0);

\coordinate (h2) at ({50/17},0);

\coordinate (h3) at ({30/61},{30*sqrt(3)/61});

\coordinate (h4) at ({55/16},{9*sqrt(3)/16});

\coordinate (h5) at ({188/71},{96*sqrt(3)/71});

\fill[red!25]
    (h1) -- (h2) -- (h4) -- (h5) -- (h3) -- cycle;

\draw[red, thick]
    (h1) -- (h2) -- (h4) -- (h5) -- (h3) -- cycle;

\draw[blue, thick, dashed]
    (e1) -- (e2) -- (e3) -- cycle;

\draw[green!60!black, thick, dotted]
    (2,{2*sqrt(3)/3}) circle ({2*sqrt(3)/3});

\fill[red] (h1) circle (2.5pt);
\fill[red] (h2) circle (2.5pt);
\fill[red] (h3) circle (2.5pt);
\fill[red] (h4) circle (2.5pt);
\fill[red] (h5) circle (2.5pt);

\node[blue, below left]  at (e1) {$e_1=(1,0,0)$};
\node[blue, below right] at (e2) {$e_2=(0,1,0)$};
\node[blue, above]       at (e3) {$e_3=(0,0,1)$};





\node[green!60!black]
    at (2,{2*sqrt(3)/3})
    {$\mathcal{C}\cap\Delta^3$};

\node[blue] at (3,3) {$\Delta^3$};

\node[red] at (2,-0.5)
    {$\operatorname{cone}(H'_d) \cap \Delta^3$};

\end{tikzpicture}
    \caption{Illustration that $H_d'$ does not satisfy the SSC. 
    \label{fig:HsscExample4}}
\end{figure}


In practice, for example in hyperspectral imaging, the issue described above implies that an endmember that is present in a larger proportion in the data will, in most cases, be more difficult\footnote{Note that this issue does not arise if a pure pixel corresponding to that endmember is present in the image.} to identify under the sum-to-one row constraint on $H$. This is the very height of irony: the more abundant a material is in the data, the harder it may be to identify using~\eqref{eq:NMFrowsumtoone}.  


A related issue is that row normalization of $H$ can make $W$ ill-conditioned when endmembers are present in rather different proportions; see the discussions in~\cite{leplat2019blind} and \cite[Section~4.3.3.3, p.~142]{gillis2020}. 
In practice, this makes the MinVol NMF variant with the sum-to-one constraint on the rows of $H$ perform poorly;  see~\cite[Section~4.3.3.5, p.~144]{gillis2020} for a discussion and a numerical example.

}

\section{Optimal $HH^\top$ for MaxVol NMF as $\lambda \rightarrow \infty$} \label{app:optsolA}

Let us introduce the notation $G = HH^\top  \in \mathbb{S}^r$, where $\mathbb{S}^r$ is the set of $r$-by-$r$ symmetric matrices. Now consider 
\begin{mini}
    {G\in \mathbb{S}^r}{f_0(G)=\revise{-}\logdet (G+\delta I)}{\label{eq:maxvol}}{}
    \addConstraint{1_r^\top G 1_r\leq a, G\geq0,}
\end{mini}
where $a>0$ and $\dom f_0=\{G\in\mathbb{S}^r,G\succ-\delta\}$. 
Note that $1_r^\top HH^\top 1_r = n$. 
\revise{Moreover, \eqref{eq:maxvol} is a relaxation since $G$ is not constrained to be positive semidefinite. However, we show in the following that the optimal solution is, which is enough for our purpose.} In fact, we prove that $G=\frac{a}{r}I$ is the unique minimizer of~\eqref{eq:maxvol}. 
To do so, we solve~\eqref{eq:maxvol} through its dual using the conjugate of $f_0$, as in \cite[Section 5.1.6]{boyd2004convex}. 
\begin{definition}
    The conjugate $f^*$ of a function $f:\R^r\rightarrow\R$ is given by
    $$f^*(y)=\sup_{x\in\dom f}\left(y^\top x-f(x)\right).$$
\end{definition}

\noindent Considering the optimization problem with linear inequality and equality constraints
\begin{equation} \label{eq:linconst} 
    \min_x f_0(x) 
    \;  \text{ such that } \; 
    Ax\leq b, \; Cx=d,
\end{equation} 
the conjugate of $f_0$ can be used to write the dual function for~\eqref{eq:linconst} as
\begin{align*}
    g(\lambda,\nu)&= \inf_x\left(f_0(x)+\lambda^\top(Ax-b)+\nu^\top(Cx-d)\right)\\
    &= -b^\top\lambda -d^\top\nu +\inf_x\left(f_0(x)+(A^\top\lambda+C^\top\nu)^\top x\right)\\
    &= -b^\top\lambda-d^\top\nu-f_0^*(-A^\top\lambda-C^\top\nu)\stepcounter{equation}\tag{\theequation}\label{eq:conjindual},
\end{align*}
\revise{where $\lambda$ and $\nu$ are the Lagrange multipliers associated respectively with the inequality constraints and the equality constraints.}
The domain of $g$ follows from the domain of $f_0^*$:
\[
\dom g=\{(\lambda,\nu)|-A^\top\lambda-C^\top\nu\in\dom f_0^*\}.
\]
Let us go back to the conjugate function of $f_0$, defined as
\[
f_0^*(Y)=\sup_{G\succ-\delta}\left(\langle Y,G\rangle+\logdet (G+\delta I)\right).
\] 
We first show that $\langle Y,G\rangle+\logdet (G+\delta I)$ is unbounded above unless $Y\prec0$. If $Y\nprec0$, then $Y$ has an eigenvector $v$, with $\|v\|_2=1$, and eigenvalue $\revise{\mu}\geq0$. Taking $G=I+tvv^\top$ we find that 
\begin{align*}
    \MoveEqLeft[6] \langle Y,G\rangle+\logdet (G+\delta I) \\
    = & \tr Y+t\revise{\mu}+\logdet((1+\delta)I+tvv^\top) \\
    = & \tr Y+t\revise{\mu}+r\log(1+\delta)+\log\left(1+\frac{t}{1+\delta}\right)
\end{align*}
which is unbounded above as $t\rightarrow\infty$.
\noindent Now consider the case $Y\prec0$. We can find the maximizing $G$ by setting the gradient with respect to $G$ equal to zero:
\[\nabla_G(\langle Y,G\rangle+\logdet (G+\delta I))=Y+(G+\delta I)^{-1}=0,\]
which leads to $G=-Y^{-1}-\delta I$. Therefore, we have
\begin{equation}
    f_0^*(Y)=\logdet(-Y^{-1})-r-\delta\tr(Y)
    \label{eq:conjugatef0}
\end{equation}
with $\dom f_0^*=-\mathbb{S}^r_{++}$. 

Applying the result in~\eqref{eq:conjindual}, the dual function for problem~\eqref{eq:maxvol}\revise{, which has a single scalar inequality $1_r^\top G 1_r\leq a$ with multiplier $\lambda_1\in\R_+$, an entrywise inequality $G\geq0$ with multiplier matrix $\lambda_2\in\R_+^{r\times r}$, and no equality constraints, is given by 
$$g(\lambda_1,\lambda_2)=\logdet\bigl(\lambda_1 J-\lambda_2\bigr) +r+\delta\tr(\lambda_1 J -\lambda_2)-\lambda_1 a$$ if $\lambda_1 J-\lambda_2\succ0$, where $J$ is the matrix of all ones of appropriate dimension, and $g(\lambda_1,\lambda_2)=-\infty$ otherwise.} Let 
$$\lambda_1^*=\frac{r}{a}\left(1+\frac{\delta}{\frac{a}{r}+\delta}\right),~\lambda_2^*=\frac{r}{a}\left(1+\frac{\delta}{\frac{a}{r}+\delta}\right)J-\frac{1}{\frac{a}{r}+\delta}I$$
and $G^*=\frac{a}{r}I$. We have $f_0(G^*)=g(\lambda_1^*,\lambda_2^*)=-r\log(\frac{a}{r}+\delta)$, meaning that there is no duality gap and that $G^*$ is a solution of~\eqref{eq:maxvol}. Finally, $G^*$ is the unique solution because $f_0$ is strictly convex.


\section{Optimal {$\widetilde{H}\widetilde{H}^\top$} for MaxVol NMF as $\lambda \rightarrow \infty$} 
\label{app:optsolB}

Let us introduce the variable $G = \widetilde{H}\widetilde{H}^\top$, and 
show that the problem
\begin{mini}
    {G \in \mathbb{S}^r}{f_0(G)=\revise{-}\logdet (G+\delta I)}{\label{eq:normmaxvol}}{}
    \addConstraint{\diag(G)=1_r, 0\leq G \leq 1,} 
\end{mini}
where $\dom f_0=\mathbb{S}^r_{++}$, has $G=I$ as a unique minimizer. Again, we solve this problem through its dual using the conjugate of $f_0$, which has already been computed in \eqref{eq:conjugatef0}. First, \eqref{eq:normmaxvol} can be reformulated as 
\begin{mini}
    {G\in \mathbb{S}^r}{f_0(G)=\revise{-}\logdet (G+\delta I)}{\label{eq:normmaxvol_linearcons}}{}
    \addConstraint{\langle E_{ii},G\rangle=1 \text{ for all } i}
    \addConstraint{\langle -E_{ij},G\rangle\leq0 \text{ for all } i,j}
    \addConstraint{\langle E_{ij},G\rangle\leq1 \text{ for all } i,j.}
\end{mini}
Using again~\eqref{eq:conjindual}, we can write the dual of~\eqref{eq:normmaxvol_linearcons} with the conjugate of $f_0$: 
\begin{multline*}
    g(\lambda_1,\lambda_2,\nu)=\logdet\bigl(\diag(\nu)+\lambda_2-\lambda_1\bigr)-\langle J,\lambda_2 \rangle - 1_r^\top\nu\\
    +r-\delta\tr\bigl(\diag(\nu)+\lambda_2-\lambda_1\bigr)
\end{multline*}
if $\diag(\nu)+\lambda_2-\lambda_1 \succ0$, and $g(\lambda_1,\lambda_2,\nu)=-\infty$ otherwise,
where $\lambda_1\in\R^{r \times r}_+$ is the multiplier of the inequality constraints $\langle -E_{ij},G\rangle\leq0$ for all $(i,j)$, $\lambda_2\in\R^{r \times r}_+$ is the multiplier of the inequality constraints $\langle E_{ij},G\rangle\leq1$ for all $(i,j)$, and $\nu\in\R^r$ is the multiplier of the equality constraints $\langle E_{ii},G\rangle=1$ for all $i$. Let $\lambda_1^*=0,\lambda_2^*=0,\nu^*=\frac{1}{1+\delta}1_r$ and $G^*=I$. We have $f_0(G^*)=g(\lambda_1^*,\lambda_2^*,\nu^*)=-r\log(1+\delta)$, meaning that there is no duality gap and that $G^*$ is a solution of \eqref{eq:normmaxvol}. Finally, $G^*$ is the unique solution because $f_0$ is strictly convex. 

Let us prove that the minimum of the volume criterion is reached when $\widetilde{H}\widetilde{H}^\top = J$ 
by solving 
\begin{mini}
    {G\in \mathbb{S}^r}{f_1(G)=\logdet (G+\delta I)}{\label{eq:minnormmaxvol_linearcons}}{}
    \addConstraint{\langle E_{ii},G\rangle=1 \text{ for all } i}
    \addConstraint{\langle -E_{ij},G\rangle\leq0 \text{ for all } i,j}
    \addConstraint{\langle E_{ij},G\rangle\leq1 \text{ for all } i,j.}
\end{mini} 
Skipping the details, the conjugate of $f_1$ is $$f_1^*(Y)=r-\delta\tr(Y)+\logdet(Y)$$
with $\dom f_1^*=\mathbb{S}^r_{++}$. From \eqref{eq:conjindual}, the dual $g$ of \eqref{eq:minnormmaxvol_linearcons} is 
\begin{multline*}
    g(\lambda_1,\lambda_2,\nu)=\delta\tr(\lambda_1-\lambda_2-\diag(\nu))-1_r^\top\nu-r\\-\logdet(\lambda_1-\lambda_2-\diag(\nu))-\langle J,\lambda_2\rangle
\end{multline*}
if $\lambda_1-\lambda_2-\diag(\nu) \succ0$, and $g(\lambda_1,\lambda_2,\nu)=-\infty$ otherwise,
where $\lambda_1\in\R^{r \times r}_+$, $\lambda_2\in\R^{r \times r}_+$ and $\nu\in\R^r$. Let $\lambda_1^*=0,\lambda_2^*=\frac{1}{\delta(r+\delta)}J,\nu^*=-\frac{1}{\delta}1_r$ and $G^*=J$. 
We have $f_1(G^*)=g(\lambda_1^*,\lambda_2^*,\nu^*)=\log((r+\delta)\delta^{r-1})$, meaning that there is no duality gap and that $G^*$ is a solution of \eqref{eq:normmaxvol}. Finally, $G^*$ is the unique solution because it is a vertex of the polyhedral set defined by the constraints in \eqref{eq:minnormmaxvol_linearcons}, and that the minimum of a strictly concave function like $f_1$ in a polyhedral set is reached on one the vertex of this set.

\bibliographystyle{abbrv}
\bibliography{thesis}

\newpage

\section*{Supplementary Material}

In this Supplementary Material, we report the results  for MinVol NMF and  MaxVol NMF on data sets which are not included in the paper. The parameters for these two models, namely $\lambda$ and $\delta$, were chosen to achieve good results. 
Table~\ref{tab:figures} summarizes the content of these figures. 

\vfill

\begin{table}[H]
    \centering
    \begin{tabular}{r|cccc}
                     &  Moffet   & Samson    & Urban   &  Jasper \\\hline
         MinVol NMF  &   Fig.~\ref{fig:MinVol_moffett}   &    Fig.~\ref{fig:MinVol_samson}        &  Fig.~\ref{fig:MinVol_urban}      & Fig.~\ref{fig:MinVol_jasper}      \\ 
         MaxVol NMF  &    Fig.~\ref{fig:MaxVol_moffett}       &     Fig.~\ref{fig:MaxVol_samson}      &   Fig.~\ref{fig:MaxVol_urban}      & Fig.~\ref{fig:MaxVol_jasper}
    \end{tabular}
    \caption{Content of this Supplementary Material: results from MinVol and MaxVol NMF on four data sets. 
    \label{tab:figures}
    }
\end{table}

\vfill

\begin{figure}[htbp!]
  \centering

  \fbox{\includegraphics[width=0.9\linewidth]{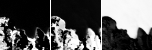}}

  \begin{tikzpicture}
    \begin{groupplot}[
        group style={
          group name=my plots,
          group size= 3 by 1,
          x descriptions at=edge bottom,
          y descriptions at=edge left,
          horizontal sep=0.0cm,
          vertical sep=0.cm,
        },
        scale only axis,height=2cm,width=0.9\linewidth/3,no markers,grid=major,xlabel={},ylabel={},ymin=0,ymax=1,
        ytick={0,1},
    ]
      \nextgroupplot
      \addplot [line width=1pt]
        table [x expr=\coordindex+1, y index=0]
        {maxvolnmf/figures/MinVol_moffett.txt};

      \nextgroupplot
      \addplot [line width=1pt,red]
        table [x expr=\coordindex+1, y index=1]
        {maxvolnmf/figures/MinVol_moffett.txt};

      \nextgroupplot
      \addplot [line width=1pt,blue]
        table [x expr=\coordindex+1, y index=2]
        {maxvolnmf/figures/MinVol_moffett.txt};
    \end{groupplot}
  \end{tikzpicture}

  \caption{Abundance maps and endmembers (tree, {\color{red}soil} and {\color{blue}water}) by MinVol NMF on Moffett, with $\lambda=1$ and $\delta=0.5$.}
  \label{fig:MinVol_moffett}
\end{figure}

\vfill

\vspace{-0.5cm}

\vfill

\begin{figure}[htbp!]
\centering
\fbox{\includegraphics[width=0.9\linewidth]{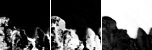}}

\begin{tikzpicture}
    \begin{groupplot}[
        group style={
                group name=my plots,
                group size= 3 by 1,
                x descriptions at=edge bottom,
                y descriptions at=edge left,
                horizontal sep=0.0cm,
                vertical sep=0.cm,
            },
            scale only axis,height=2cm,width=0.9\linewidth/3,no markers,grid=major,xlabel={},ylabel={},ymin=0,ymax=1,
            ytick={0,1},
        ]
        \nextgroupplot[]
        \addplot [line width = 1pt] table [x expr=\coordindex+1, y index=0] {maxvolnmf/figures/MaxVol_moffett.txt};
        \nextgroupplot[]
        \addplot [line width = 1pt,red] table [x expr=\coordindex+1, y index=1] {maxvolnmf/figures/MaxVol_moffett.txt};
        \nextgroupplot[]
        \addplot [line width = 1pt,blue] table [x expr=\coordindex+1, y index=2] {maxvolnmf/figures/MaxVol_moffett.txt};
    \end{groupplot}
\end{tikzpicture}
  
  \caption[Abundance maps and endmembers by MaxVol NMF on Moffett]{Abundance maps and endmembers (tree, {\color{red}soil} and {\color{blue}water}) by MaxVol NMF on Moffett, with $\lambda=1$ and $\delta=1$.}
  \label{fig:MaxVol_moffett}
\end{figure}

\begin{figure}[htbp!]
 \centering\fbox{\includegraphics[width=0.9\linewidth]{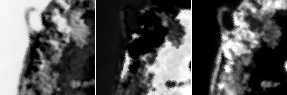}}

  \begin{tikzpicture}
    \begin{groupplot}[
        group style={
          group name=my plots,
          group size= 3 by 1,
          x descriptions at=edge bottom,
          y descriptions at=edge left,
          horizontal sep=0.0cm,
          vertical sep=0.cm,
        },
        scale only axis,width=0.9\linewidth/3,no markers,grid=major,xlabel={},ylabel={},ymin=0,ymax=1,ytick={0,1},
      ]
      \nextgroupplot
      \addplot [line width = 1pt,blue] table [x expr=\coordindex+1, y index=0] {maxvolnmf/figures/MinVol_samson.txt};
      \nextgroupplot
      \addplot [line width = 1pt] table [x expr=\coordindex+1, y index=1] {maxvolnmf/figures/MinVol_samson.txt};
      \nextgroupplot
      \addplot [line width = 1pt,red] table [x expr=\coordindex+1, y index=2] {maxvolnmf/figures/MinVol_samson.txt};
    \end{groupplot}
  \end{tikzpicture}

  \caption[Abundance maps and endmembers by MinVol NMF on Samson]{Abundance maps and endmembers ({\color{blue}water}, soil and {\color{red}tree}) by MinVol NMF on Samson, with $\lambda=1$ and $\delta=0.5$.}
  \label{fig:MinVol_samson}
\end{figure}

\begin{figure}[htbp!]
\centering\fbox{\includegraphics[width=0.9\linewidth]{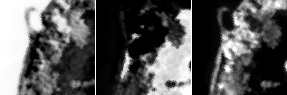}}

  \begin{tikzpicture}
    \begin{groupplot}[
        group style={
          group name=my plots,
          group size= 3 by 1,
          x descriptions at=edge bottom,
          y descriptions at=edge left,
          horizontal sep=0.0cm,
          vertical sep=0.cm,
        },
        scale only axis,width=0.9\linewidth/3,no markers,grid=major,xlabel={},ylabel={},ymin=0,ymax=1,ytick={0,1},
      ]
      \nextgroupplot
      \addplot [line width = 1pt,blue] table [x expr=\coordindex+1, y index=0] {maxvolnmf/figures/MaxVol_samson.txt};
      \nextgroupplot
      \addplot [line width = 1pt] table [x expr=\coordindex+1, y index=1] {maxvolnmf/figures/MaxVol_samson.txt};
      \nextgroupplot
      \addplot [line width = 1pt,red] table [x expr=\coordindex+1, y index=2] {maxvolnmf/figures/MaxVol_samson.txt};
    \end{groupplot}
  \end{tikzpicture}
  
  \caption[Abundance maps and endmembers by N-MaxVol NMF on Samson]{Abundance maps and endmembers ({\color{blue}water}, soil and {\color{red}tree}) by MaxVol NMF on Samson, with $\lambda=1$ and $\delta=1$.}
  \label{fig:MaxVol_samson}
\end{figure}

\begin{figure*}[htbp!]
  \begin{subfigure}{\linewidth}
    \caption{$r=4, \lambda=0.1, \delta=0.1$}
    \centering\fbox{\includegraphics[width=0.9\linewidth]{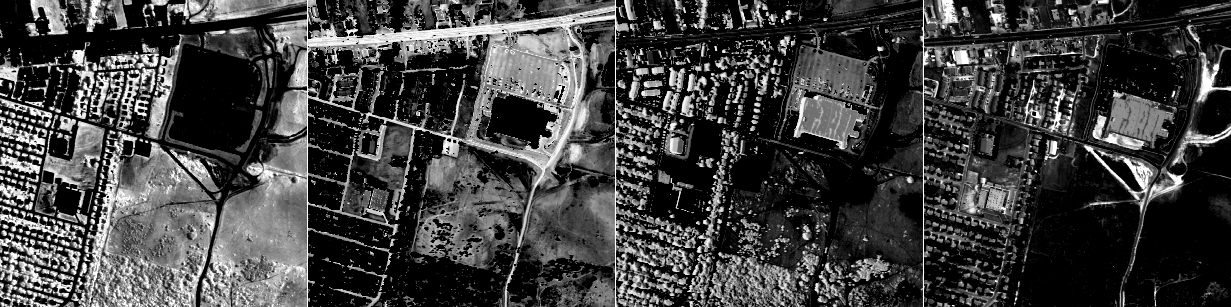}}

    \begin{tikzpicture}
      \begin{groupplot}[
          group style={
            group name=my plots,
            group size= 4 by 1,
            x descriptions at=edge bottom,
            y descriptions at=edge left,
            horizontal sep=0.0cm,
            vertical sep=0.cm,
          },
          scale only axis,width=0.9\linewidth/4,no markers,grid=major,xlabel={},ylabel={},ymin=0,ymax=1,ytick={0,1},
        ]
        \nextgroupplot
        \addplot [line width = 1pt] table [x expr=\coordindex+1, y index=0] {maxvolnmf/figures/MinVol_urban4.txt};
        \nextgroupplot
        \addplot [line width = 1pt,teal] table [x expr=\coordindex+1, y index=1] {maxvolnmf/figures/MinVol_urban4.txt};
        \nextgroupplot
        \addplot [line width = 1pt,red] table [x expr=\coordindex+1, y index=2] {maxvolnmf/figures/MinVol_urban4.txt};
        \nextgroupplot
        \addplot [line width = 1pt,blue] table [x expr=\coordindex+1, y index=3] {maxvolnmf/figures/MinVol_urban4.txt};
      \end{groupplot}
    \end{tikzpicture}
  \end{subfigure}

  \begin{subfigure}{\linewidth}
    \caption{$r=5, \lambda=0.1, \delta=0.1$}
    \centering\fbox{\includegraphics[width=0.9\linewidth]{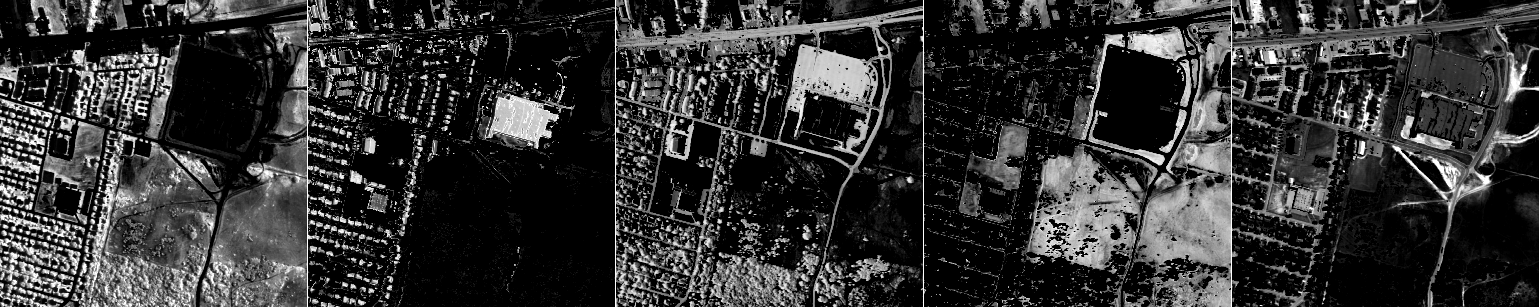}}

    \begin{tikzpicture}
      \begin{groupplot}[
          group style={
            group name=my plots,
            group size= 5 by 1,
            x descriptions at=edge bottom,
            y descriptions at=edge left,
            horizontal sep=0.0cm,
            vertical sep=0.cm,
          },
          scale only axis,width=0.9\linewidth/5,no markers,grid=major,xlabel={},ylabel={},ymin=0,ymax=1,ytick={0,1},
        ]
        \nextgroupplot
        \addplot [line width = 1pt,teal] table [x expr=\coordindex+1, y index=0] {maxvolnmf/figures/MinVol_urban5.txt};
        \nextgroupplot
        \addplot [line width = 1pt,red] table [x expr=\coordindex+1, y index=1] {maxvolnmf/figures/MinVol_urban5.txt};
        \nextgroupplot
        \addplot [line width = 1pt,orange] table [x expr=\coordindex+1, y index=2] {maxvolnmf/figures/MinVol_urban5.txt};
        \nextgroupplot
        \addplot [line width = 1pt] table [x expr=\coordindex+1, y index=3] {maxvolnmf/figures/MinVol_urban5.txt};
        \nextgroupplot
        \addplot [line width = 1pt,blue] table [x expr=\coordindex+1, y index=4] {maxvolnmf/figures/MinVol_urban5.txt};
      \end{groupplot}
    \end{tikzpicture}
  \end{subfigure}

  \begin{subfigure}{\linewidth}
    \caption{$r=6, \lambda=0.1, \delta=0.1$}
    \centering\fbox{\includegraphics[width=0.9\linewidth]{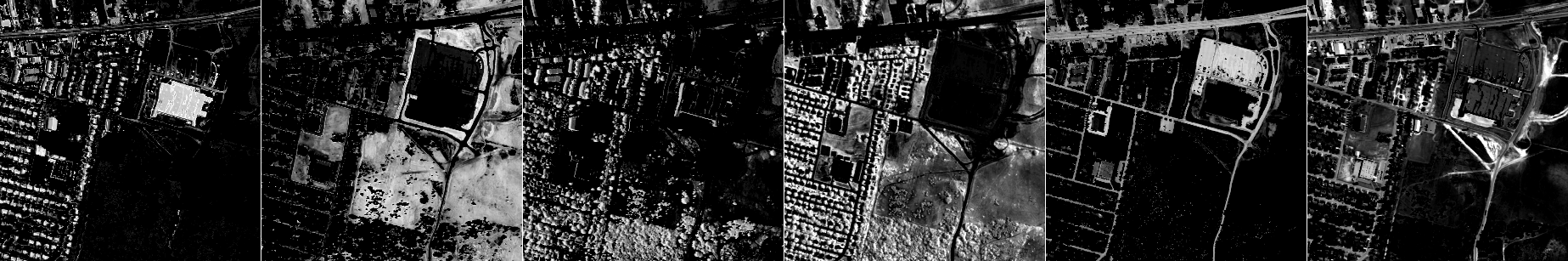}}

    \begin{tikzpicture}
      \begin{groupplot}[
          group style={
            group name=my plots,
            group size= 6 by 1,
            x descriptions at=edge bottom,
            y descriptions at=edge left,
            horizontal sep=0.0cm,
            vertical sep=0.cm,
          },
          scale only axis,width=0.9\linewidth/6,no markers,grid=major,xlabel={},ylabel={},ymin=0,ymax=1,ytick={0,1},
        ]
        \nextgroupplot
        \addplot [line width = 1pt] table [x expr=\coordindex+1, y index=0] {maxvolnmf/figures/MinVol_urban6.txt};
        \nextgroupplot
        \addplot [line width = 1pt,blue] table [x expr=\coordindex+1, y index=1] {maxvolnmf/figures/MinVol_urban6.txt};
        \nextgroupplot
        \addplot [line width = 1pt,olive] table [x expr=\coordindex+1, y index=2] {maxvolnmf/figures/MinVol_urban6.txt};
        \nextgroupplot
        \addplot [line width = 1pt,red] table [x expr=\coordindex+1, y index=3] {maxvolnmf/figures/MinVol_urban6.txt};
        \nextgroupplot
        \addplot [line width = 1pt,orange] table [x expr=\coordindex+1, y index=4] {maxvolnmf/figures/MinVol_urban6.txt};
        \nextgroupplot
        \addplot [line width = 1pt,teal] table [x expr=\coordindex+1, y index=5] {maxvolnmf/figures/MinVol_urban6.txt};
      \end{groupplot}
    \end{tikzpicture}
  \end{subfigure}
  \caption[Abundance maps and endmembers by MinVol NMF on Urban depending on $r$]{Abundance maps and endmembers
  by MinVol NMF on Urban, depending on $r$, $\lambda$ and $\delta$.}
  \label{fig:MinVol_urban}
\end{figure*}

\begin{figure*}[htbp!]
  \begin{subfigure}{\linewidth}
    \caption{$r=4, \lambda=0.2, \delta=2$}
    \centering\fbox{\includegraphics[width=0.9\linewidth]{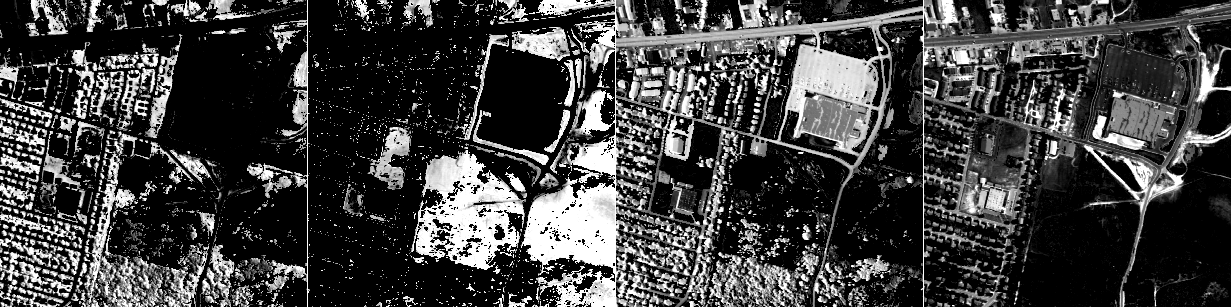}}

    \begin{tikzpicture}
      \begin{groupplot}[
          group style={
            group name=my plots,
            group size= 4 by 1,
            x descriptions at=edge bottom,
            y descriptions at=edge left,
            horizontal sep=0.0cm,
            vertical sep=0.cm,
          },
          scale only axis,width=0.9\linewidth/4,no markers,grid=major,xlabel={},ylabel={},ymin=0,ymax=1,ytick={0,1},
        ]
        \nextgroupplot
        \addplot [line width = 1pt] table [x expr=\coordindex+1, y index=0] {maxvolnmf/figures/MaxVol_urban4.txt};
        \nextgroupplot
        \addplot [line width = 1pt,teal] table [x expr=\coordindex+1, y index=1] {maxvolnmf/figures/MaxVol_urban4.txt};
        \nextgroupplot
        \addplot [line width = 1pt,red] table [x expr=\coordindex+1, y index=2] {maxvolnmf/figures/MaxVol_urban4.txt};
        \nextgroupplot
        \addplot [line width = 1pt,blue] table [x expr=\coordindex+1, y index=3] {maxvolnmf/figures/MaxVol_urban4.txt};
      \end{groupplot}
    \end{tikzpicture}
  \end{subfigure}

  \begin{subfigure}{\linewidth}
    \caption{$r=5, \lambda=0.5, \delta=2$}
    \centering\fbox{\includegraphics[width=0.9\linewidth]{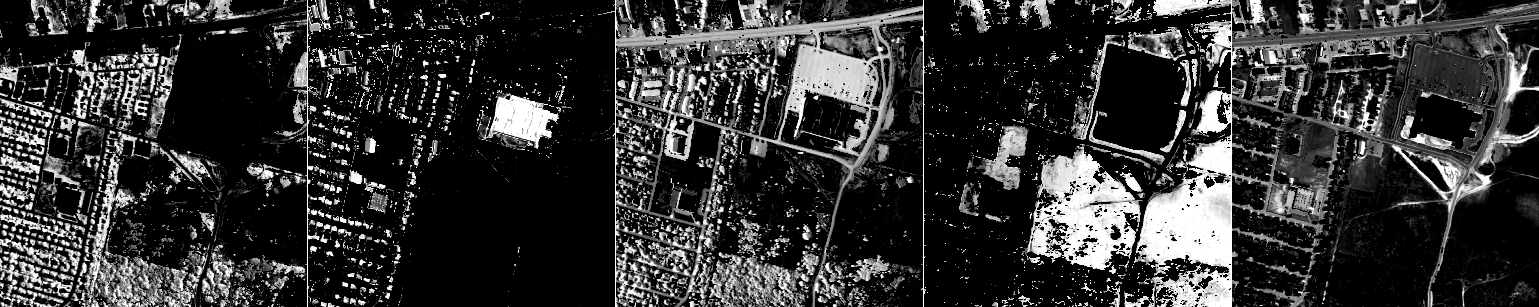}}

    \begin{tikzpicture}
      \begin{groupplot}[
          group style={
            group name=my plots,
            group size= 5 by 1,
            x descriptions at=edge bottom,
            y descriptions at=edge left,
            horizontal sep=0.0cm,
            vertical sep=0.cm,
          },
          scale only axis,width=0.9\linewidth/5,no markers,grid=major,xlabel={},ylabel={},ymin=0,ymax=1,ytick={0,1},
        ]
        \nextgroupplot
        \addplot [line width = 1pt,teal] table [x expr=\coordindex+1, y index=0] {maxvolnmf/figures/MaxVol_urban5.txt};
        \nextgroupplot
        \addplot [line width = 1pt,red] table [x expr=\coordindex+1, y index=1] {maxvolnmf/figures/MaxVol_urban5.txt};
        \nextgroupplot
        \addplot [line width = 1pt,orange] table [x expr=\coordindex+1, y index=2] {maxvolnmf/figures/MaxVol_urban5.txt};
        \nextgroupplot
        \addplot [line width = 1pt] table [x expr=\coordindex+1, y index=3] {maxvolnmf/figures/MaxVol_urban5.txt};
        \nextgroupplot
        \addplot [line width = 1pt,blue] table [x expr=\coordindex+1, y index=4] {maxvolnmf/figures/MaxVol_urban5.txt};
      \end{groupplot}
    \end{tikzpicture}
  \end{subfigure}

  \begin{subfigure}{\linewidth}
    \caption{$r=6, \lambda=0.5, \delta=2$}
    \centering\fbox{\includegraphics[width=0.9\linewidth]{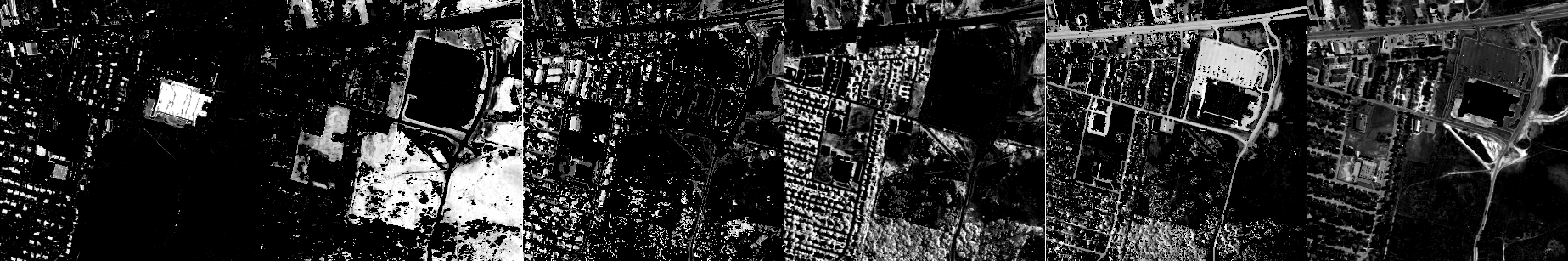}}

    \begin{tikzpicture}
      \begin{groupplot}[
          group style={
            group name=my plots,
            group size= 6 by 1,
            x descriptions at=edge bottom,
            y descriptions at=edge left,
            horizontal sep=0.0cm,
            vertical sep=0.cm,
          },
          scale only axis,width=0.9\linewidth/6,no markers,grid=major,xlabel={},ylabel={},ymin=0,ymax=1,ytick={0,1},
        ]
        \nextgroupplot
        \addplot [line width = 1pt] table [x expr=\coordindex+1, y index=0] {maxvolnmf/figures/MaxVol_urban6.txt};
        \nextgroupplot
        \addplot [line width = 1pt,blue] table [x expr=\coordindex+1, y index=1] {maxvolnmf/figures/MaxVol_urban6.txt};
        \nextgroupplot
        \addplot [line width = 1pt,olive] table [x expr=\coordindex+1, y index=2] {maxvolnmf/figures/MaxVol_urban6.txt};
        \nextgroupplot
        \addplot [line width = 1pt,red] table [x expr=\coordindex+1, y index=3] {maxvolnmf/figures/MaxVol_urban6.txt};
        \nextgroupplot
        \addplot [line width = 1pt,orange] table [x expr=\coordindex+1, y index=4] {maxvolnmf/figures/MaxVol_urban6.txt};
        \nextgroupplot
        \addplot [line width = 1pt,teal] table [x expr=\coordindex+1, y index=5] {maxvolnmf/figures/MaxVol_urban6.txt};
      \end{groupplot}
    \end{tikzpicture}
  \end{subfigure}
  \caption[Abundance maps and endmembers by MaxVol NMF on Urban depending on $r$]{Abundance maps and endmembers
  by MaxVol NMF on Urban, depending on $r$, $\lambda$ and $\delta$.}
  \label{fig:MaxVol_urban}
\end{figure*}

\begin{figure*}[htbp!]
  \centering\fbox{\includegraphics[width=0.9\linewidth]{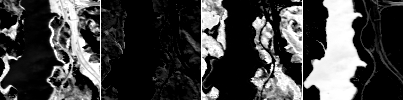}}

  \begin{tikzpicture}
    \begin{groupplot}[
        group style={
          group name=my plots,
          group size= 4 by 1,
          x descriptions at=edge bottom,
          y descriptions at=edge left,
          horizontal sep=0.0cm,
          vertical sep=0.cm,
        },
        scale only axis,width=0.9\linewidth/4,no markers,grid=major,xlabel={},ylabel={},ymin=0,ymax=1,ytick={0,1},
      ]
      \nextgroupplot
      \addplot [line width = 1pt] table [x expr=\coordindex+1, y index=0] {maxvolnmf/figures/MinVol_jasper.txt};
      \nextgroupplot
      \addplot [line width = 1pt,blue] table [x expr=\coordindex+1, y index=1] {maxvolnmf/figures/MinVol_jasper.txt};
      \nextgroupplot
      \addplot [line width = 1pt,red] table [x expr=\coordindex+1, y index=2] {maxvolnmf/figures/MinVol_jasper.txt};
      \nextgroupplot
      \addplot [line width = 1pt,teal] table [x expr=\coordindex+1, y index=3] {maxvolnmf/figures/MinVol_jasper.txt};
    \end{groupplot}
  \end{tikzpicture}
  \caption[Abundance maps and endmembers by MinVol NMF with $r=4$ on Jasper]{Abundance maps and endmembers (road and soil, {\color{red}tree and soil}, {\color{teal}water}) by MinVol NMF with $r=4$ on Jasper, with $\lambda=0.5$ and $\delta=1$.}
  \label{fig:MinVol_jasper}
\end{figure*}

\begin{figure*}[htbp!]
  \centering\fbox{\includegraphics[width=0.9\linewidth]{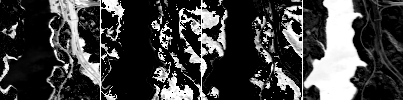}}

  \begin{tikzpicture}
    \begin{groupplot}[
        group style={
          group name=my plots,
          group size= 4 by 1,
          x descriptions at=edge bottom,
          y descriptions at=edge left,
          horizontal sep=0.0cm,
          vertical sep=0.cm,
        },
        scale only axis,width=0.9\linewidth/4,no markers,grid=major,xlabel={},ylabel={},ymin=0,ymax=1,ytick={0,1},
      ]
      \nextgroupplot
      \addplot [line width = 1pt] table [x expr=\coordindex+1, y index=0] {maxvolnmf/figures/MaxVol_jasper.txt};
      \nextgroupplot
      \addplot [line width = 1pt,blue] table [x expr=\coordindex+1, y index=1] {maxvolnmf/figures/MaxVol_jasper.txt};
      \nextgroupplot
      \addplot [line width = 1pt,red] table [x expr=\coordindex+1, y index=2] {maxvolnmf/figures/MaxVol_jasper.txt};
      \nextgroupplot
      \addplot [line width = 1pt,teal] table [x expr=\coordindex+1, y index=3] {maxvolnmf/figures/MaxVol_jasper.txt};
    \end{groupplot}
  \end{tikzpicture}
  \caption[Abundance maps and endmembers by MaxVol NMF with $r=4$ on Jasper]{Abundance maps and endmembers (road, {\color{blue}tree and soil}, {\color{red}tree}, {\color{teal}water}) by MaxVol NMF with $r=4$ on Jasper, with $\lambda=0.5$ and $\delta=1$.}
  \label{fig:MaxVol_jasper}
\end{figure*}

\end{document}